\DeclareMathOperator*{\argmin}{\text{argmin}}
\newcommand{\E}{{\mathbf{E}}} 
\renewcommand{\S}{{\cal S}} 
\newcommand{\T}{{\cal T}} 
\newif\ifcompiletikz
\newtheorem{assumption}{\hspace{0pt}\bf Assumption}
\begin{document}


\jmlrheading{1}{2000}{1-48}{4/00}{10/00}{Aryan Mokhtari and Alejandro Ribeiro}


\ShortHeadings{First-Order Adaptive Sample Size Methods}{Mokhtari and Ribeiro}
\firstpageno{1}

\title{First-Order Adaptive Sample Size Methods to \\ Reduce Complexity of 
 Empirical Risk Minimization}

\author{\name Aryan Mokhtari  \email aryanm@seas.upenn.edu \\
%
       \name Alejandro Ribeiro \email aribeiro@seas.upenn.edu  \\
       \addr Department of Electrical and Systems Engineering\\
       University of Pennsylvania\\
       Philadelphia, PA 19104, USA}

\editor{}
\maketitle

\thispagestyle{empty}
\maketitle

\begin{abstract}
This paper studies empirical risk minimization (ERM) problems for large-scale datasets and incorporates the idea of adaptive sample size methods to improve the guaranteed convergence bounds for first-order stochastic and deterministic methods. In contrast to traditional methods that attempt to solve the ERM problem corresponding to the full dataset directly, adaptive sample size schemes start with a small number of samples and solve the corresponding ERM problem to its statistical accuracy. The sample size is then grown geometrically -- e.g., scaling by a factor of two -- and use the solution of the previous ERM as a warm start for the new ERM. Theoretical analyses show that the use of adaptive sample size methods reduces the overall computational cost of achieving the statistical accuracy of the whole dataset for a broad range of deterministic and stochastic first-order methods. The gains are specific to the choice of method. When particularized to, e.g., accelerated gradient descent and stochastic variance reduce gradient, the computational cost advantage is a logarithm of the number of training samples. Numerical experiments on various datasets confirm theoretical claims and showcase the gains of using the proposed adaptive sample size scheme. 
\end{abstract}


%
\section{Introduction}\label{sec_intro}

Finite sum minimization (FSM) problems involve objectives that are expressed as the sum of a typically large number of component functions. Since evaluating descent directions is costly, it is customary to utilize stochastic descent methods that access only one of the functions at each iteration. When considering first order methods, a fitting measure of complexity is the total number of gradient evaluations that are needed to achieve optimality of order $\eps$. The paradigmatic deterministic gradient descent (GD) method serves as a naive complexity upper bound and has long been known to obtain an $\eps$-suboptimal solution with $\mathcal{O}(N\kappa\log(1/\eps))$ gradient evaluations for an FSM problem with $N$ component functions and condition number $\kappa$ \cite{nesterov2013introductory}. Accelerated gradient descent (AGD) \cite{nesterov2007gradient} improves the computational complexity of GD to $\mathcal{O}(N\sqrt{\kappa}\log(1/\eps))$, which is known to be the optimal bound for deterministic first-order methods \cite{nesterov2013introductory}. In terms of stochastic optimization, it has been only recently that linearly convergent methods have been proposed. Stochastic averaging gradient \cite{roux2012stochastic,defazio2014saga}, stochastic variance reduction \cite{johnson2013accelerating}, and dual coordinate descent \cite{shalev2013stochastic, shalev2016accelerated}, have all been shown to converge to $\eps$-accuracy at a cost of $\mathcal{O}((N +{\kappa})\log(1/\eps))$ gradient evaluations. The accelerating catalyst framework in \cite{lin2015universal} further reduces complexity to $\mathcal{O}((N+\sqrt{N\kappa})\log(\kappa)\log(1/\eps))$ and the works in \cite{Allenzhu2017-katyusha} and \cite{defazio2016simple} to $\mathcal{O}((N+\sqrt{N\kappa})\log(1/\eps))$. The latter matches the upper bound on the complexity of stochastic methods \cite{woodworth2016tight}. 

Perhaps the main motivation for studying FSM is the solution of empirical risk minimization (ERM) problems associated with a large training set. ERM problems are particular cases of FSM, but they do have two specific qualities that come from the fact that ERM is a proxy for statistical loss minimization. The first property is that since the empirical risk and the statistical loss have different minimizers, there is no reason to solve ERM beyond the expected difference between the two objectives. This so-called \textit{statistical accuracy} takes the place of $\eps$ in the complexity orders of the previous paragraph and is a constant of order $\mathcal{O}(1/N^\alpha)$ where $\alpha$ is a constant from the interval $[0.5,1]$ depending on the regularity of the loss function; see Section \ref{sec_erm}. The second important property of ERM is that the component functions are drawn from a common distribution. This implies that if we consider subsets of the training set, the respective empirical risk functions are not that different from each other and, indeed, their differences are related to the statistical accuracy of the subset.

The relationship of ERM to statistical loss minimization suggests that ERM problems have more structure than FSM problems. This is not exploited by most existing methods which, albeit used for ERM, are in fact designed for FSM. The goal of this paper is to exploit the relationship between ERM and statistical loss minimization to achieve lower overall computational complexity for a broad class of first-order methods applied to ERM. The technique we propose uses subsamples of the training set containing $n\leq N$ component functions that we grow geometrically. In particular, we start by a small number of samples and minimize the corresponding empirical risk added by a regularization term of order $V_n$ up to its statistical accuracy. Note that, based on the first property of ERM, the added adaptive regularization term does not modify the required accuracy while it makes the problem strongly convex and improves the problem condition number. After solving the subproblem, we double the size of the training set and use the solution of the problem with $n$ samples as a warm start for the problem with $2n$ samples. This is a reasonable initialization since based on the second property of ERM the functions are drawn from a joint distribution, and, therefore, the optimal values of the ERM problems with $n$ and $2n$ functions are not that different from each other. The proposed approach succeeds in exploiting the two properties of ERM problems to improve complexity bounds of first-order methods. In particular, we show that to reach the statistical accuracy of the full training set the adaptive sample size scheme reduces the overall computational complexity of a broad range of first-order methods by a factor of $\log({N}^\alpha)$. For instance, the overall computational complexity of adaptive sample size AGD to reach the statistical accuracy of the full training set is of order $\mathcal{O}(N\sqrt{\kappa})$ which is lower than $\mathcal{O}((N\sqrt{\kappa})\log(N^\alpha))$ complexity of AGD.

\textbf{Related work.} The adaptive sample size approach was used in \cite{daneshmand2016starting} to improve the performance of the SAGA method \cite{defazio2014saga} for solving ERM problems. In the dynamic SAGA (DynaSAGA) method in \cite{daneshmand2016starting}, the size of training set grows at each iteration by adding two new samples, and the iterates are updated by a single step of SAGA. Although DynaSAGA succeeds in improving the performance of SAGA for solving ERM problems, it does not use an adaptive regularization term to tune the problem condition number. Moreover, DynaSAGA only works for strongly convex functions, while in our proposed scheme the functions are convex (not necessarily strongly convex). The work in \cite{AdaNewton} is the most similar work to this manuscript. The Ada Newton method introduced in \cite{AdaNewton} aims to solve each subproblem within its statistical accuracy with a single update of Newton's method by ensuring that iterates always stay in the quadratic convergence region of Newton's method. Ada Newton reaches the statistical accuracy of the full training in almost two passes over the dataset; however, its computational complexity is prohibitive since it requires computing the objective function Hessian and its inverse at each iteration.


%
\section{Problem Formulation}\label{sec_erm}

Consider a decision vector $\bbw\in\reals^p$, a random variable $Z$ with realizations $z$ and a convex loss function $f(\bbw;z)$. We aim to find the optimal argument that minimizes the optimization problem 
\begin{align}\label{eqn_stat_loss}
   \bbw^* := \argmin_{\bbw} L(\bbw) 
           = \argmin_{\bbw} \E_Z[f(\bbw,Z)]
           = \argmin_{\bbw} \int_\bbZ f(\bbw,Z) P(dz)  ,
\end{align}
where $L(\bbw):=\E_Z[f(\bbw,Z)]$ is defined as the expected loss, and $P$ is the probability distribution of the random variable $Z$. The optimization problem in \eqref{eqn_stat_loss} cannot be solved since the distribution $P$ is unknown. However, we have access to a training set $\ccalT = \{z_1,\ldots,z_N\}$ containing
$N$ independent samples $z_1,\ldots,z_N$ drawn from $P$, and, therefore, we attempt to minimize the empirical loss associated with the training set $\ccalT = \{z_1,\ldots,z_N\}$, which is equivalent to minimizing the problem
\begin{align}\label{eqn_empirical_loss}
   \bbw_{n}^{\dagger} := \argmin_{\bbw} L_{n}(\bbw) 
                 = \argmin_{\bbw} \frac{1}{n} \sum_{i=1}^{n} f (\bbw,z_i),
\end{align}
for $n=N$. Note that in \eqref{eqn_empirical_loss} we defined $L_{n}(\bbw):=(1/n)\sum_{i=1}^{n} f (\bbw,z_i)$ as the empirical loss.


There is a rich literature on bounds for the difference between the expected loss $L$ and the empirical loss $L_n$ which is also referred to as \textit{estimation error} \cite{DBLP:conf/nips/BottouB07,bottou2010large}. We assume here that there exists a constant $V_n$, which depends on the number of samples $n$, that upper bounds the difference between the expected and empirical losses for all $\bbw\in \reals^p$
\begin{align}\label{eqn_loss_minus_erm}
  \mathbb{E}\left[ \sup_{\bbw\in \reals^p}|L(\bbw) - L_{n}(\bbw) |  \right] \leq V_n,
\end{align}
where the expectation is with respect to the choice of the training set. The celebrated work of Vapnik in \cite[Section 3.4]{{vapnik1998statistical}} provides the upper bound $V_n = \mathcal{O}(\sqrt{({1}/{{n}})\log({1}/{{n}} )})$ which can be improved to $V_n = \mathcal{O}(\sqrt{{1}/{{n}}})$ using the chaining technique (see, e.g.,  \cite{bousquet2002concentration}). Bounds of the order $V_n = O(1/n)$ have been derived more recently under stronger regularity conditions that are not uncommon in practice, \cite{bartlett2006convexity, frostig2014competing, DBLP:conf/nips/BottouB07}. In this paper, we report our results using the general bound $V_n = O(1/n^{\alpha})$ where $\alpha$ can be any constant form the interval $[0.5,1]$.



The observation that the optimal values of the expected loss and empirical loss are within a $V_n$ distance of each other implies that there is no gain in improving the \text\it{optimization error} of minimizing $L_n$ beyond the constant $V_n$. In other words, if we find an approximate solution $\bbw_n$ such that the optimization error is bounded by $L_{n}(\bbw_n)-L_{n}(\bbw_n^\dagger)\leq V_n$, then finding a more accurate solution to reduce the optimization error is not beneficial since the overall error, i.e., the sum of estimation and optimization errors, does not become smaller than $V_n$. Throughout the paper we say that $\bbw_n$ solves the ERM problem in \eqref{eqn_empirical_loss} to within its statistical accuracy if it satisfies $L_{n}(\bbw_n)-L_{n}(\bbw_n^\dagger)\leq V_n$.

 We can further leverage the estimation error to add a regularization term of the form $(c V_n/2) \|\bbw\|^2$ to the empirical loss to ensure that the problem is strongly convex. To do so, we define the regularized empirical risk $R_{n}(\bbw):= L_{n}(\bbw) + (c V_n/2)\|\bbw\|^2$ and the corresponding optimal argument
\begin{align}\label{eqn_empirical_loss_regularized}
   \bbw_{n}^{*} := \argmin_{\bbw} R_{n}(\bbw)
                       = \argmin_{\bbw} L_{n}(\bbw) + \frac{c V_n}{2} \|\bbw\|^2,
\end{align}
and attempt to minimize $R_n$ with accuracy $V_n$.
Since the regularization in \eqref{eqn_empirical_loss_regularized} is of order $V_n$ and \eqref{eqn_loss_minus_erm} holds, the difference between $R_{n}(\bbw_n^*)$ and $L(\bbw^*)$ is also of order $V_n$ -- this is not immediate as it seems; see \cite{shalev2010learnability}. Thus, the variable $\bbw_n$ solves the ERM problem in \eqref{eqn_empirical_loss} to within its statistical accuracy if it satisfies $R_{n}(\bbw_n)-R_{n}(\bbw_n^*)\leq V_n$. It follows that by solving the problem in \eqref{eqn_empirical_loss_regularized} for $n=N$ we find $\bbw_N^*$ that solves the expected risk minimization in \eqref{eqn_stat_loss} up to the statistical accuracy $V_N$ of the full training set $\ccalT$. In the following section we introduce a class of methods that solve problem \eqref{eqn_empirical_loss_regularized} up to its statistical accuracy faster than traditional deterministic and stochastic descent methods.


%

\section{Adaptive Sample Size Methods}\label{sec_ada_methods}

The empirical risk minimization (ERM) problem in \eqref{eqn_empirical_loss_regularized} can be solved using state-of-the-art methods for minimizing strongly convex functions. However, these methods never exploit the particular property of ERM that the functions are drawn from the same distribution. In this section, we propose an \textit{adaptive sample size} scheme which exploits this property of ERM to improve the convergence guarantees for traditional optimization method to reach the statistical accuracy of the full training set. In the proposed adaptive sample size scheme, we start by a small number of samples and solve its corresponding ERM problem with a specific accuracy. Then, we double the size of the training set and use the solution of the previous ERM problem -- with half samples -- as a warm start for the new ERM problem. This procedure keeps going until the training set becomes identical to the given training set $\ccalT$ which contains $N$ samples.

Consider the training set $\ccalS_m$ with $m$ samples as a subset of the full training $\ccalT$, i.e., $\ccalS_m\subset \ccalT$. Assume that we have solved the ERM problem corresponding to the set $\ccalS_m$ such that the approximate solution $\bbw_m$ satisfies the condition $\mathbb{E}[R_m(\bbw_m)-R_m(\bbw_m^*)]\leq \delta_m$. Now the next step in the proposed adaptive sample size scheme is to double the size of the current training set $\ccalS_m$ and solve the ERM problem corresponding to the set $\ccalS_n$ which has $n=2m$ samples and contains the previous set, i.e.,  $\ccalS_m\subset \ccalS_n\subset \ccalT$.

We use $\bbw_m$ which is a proper approximate for the optimal solution of $R_m$ as the initial iterate for the optimization method that we use to minimize the risk $R_n$. This is a reasonable choice if the optimal arguments of $R_m$ and $R_n$ are close to each other, which is the case since samples are drawn from a fixed distribution $\ccalP$. Starting with $\bbw_m$, we can use first-order descent methods to minimize the empirical risk $R_n$. Depending on the iterative method that we use for solving each ERM problem we might need different number of iterations to find an approximate solution $\bbw_n$ which satisfies the condition $\mathbb{E}[R_n(\bbw_n)-R_n(\bbw_n^*)]\leq \delta_n$. To design a comprehensive routine we need to come up with a proper condition for the required accuracy $\delta_n$ at each phase. 

In the following proposition we derive an upper bound for the expected suboptimality of the variable $\bbw_m$ for the risk $R_n$ based on the accuracy of $\bbw_m$ for the previous risk $R_m$ associated with the training set $\ccalS_m$. This upper bound allows us to choose the accuracy $\delta_m$ efficiently.

\begin{proposition}\label{main_theorem_444}
Consider the sets $\S_m$ and $\S_n$ as subsets of the training set $\T$ such that $\S_m \!\subset \S_n\!\subset \T$, where the number of samples in the sets $\S_m$ and $\S_n$ are $m$ and $n$, respectively. Further, define $\bbw_{m}$ as an $\delta_m$ optimal solution of the risk $R_{m}$ in expectation, i.e., $\mathbb{E}[{R_{m}(\bbw_m)-R_{m}^{*}] \leq \delta_m}$, and recall $V_n$ as the statistical accuracy of the training set $\ccalS_n$. Then the empirical risk error $ R_{n}(\bbw_m) - R_n(\bbw_n^*)$ of the variable $\bbw_m$ corresponding to the set $\S_n$ in expectation is bounded above by 
\begin{equation}\label{theorem_result_444}
 \mathbb{E}[R_{n}(\bbw_m) - R_n(\bbw_n^*)] \leq \delta_m +  \frac{2(n-m)}{n} \left(V_{n-m}+V_m\right)
 +2\left(V_m-V_n\right)
 + \frac{c(V_m-V_n)}{2}\|\bbw^*\|^2.
\end{equation}
\end{proposition}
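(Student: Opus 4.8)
The plan is to telescope the suboptimality through the previous risk $R_m$ and its minimizer, writing
\[
R_n(\bbw_m)-R_n(\bbw_n^*) = [R_n(\bbw_m)-R_m(\bbw_m)] + [R_m(\bbw_m)-R_m(\bbw_m^*)] + [R_m(\bbw_m^*)-R_n(\bbw_n^*)].
\]
The middle bracket is at most $\delta_m$ in expectation by hypothesis, so the whole problem reduces to comparing $R_m$ and $R_n$. The workhorse for that is an elementary identity from $\S_m\subset\S_n$ with $n=2m$: splitting $\S_n$ into $\S_m$ and the $n-m$ fresh samples (whose empirical loss I call $L_{n-m}$) gives $L_n=\tfrac{m}{n}L_m+\tfrac{n-m}{n}L_{n-m}$, hence for every $\bbw$,
\[
R_n(\bbw)-R_m(\bbw)=\frac{n-m}{n}(L_{n-m}(\bbw)-L_m(\bbw))+\frac{c(V_n-V_m)}{2}\|\bbw\|^2 .
\]
Everything else is bookkeeping on top of this identity combined with the uniform estimation bound \eqref{eqn_loss_minus_erm}.

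For the first bracket I would evaluate the identity at $\bbw=\bbw_m$. Its regularization piece $\tfrac{c(V_n-V_m)}{2}\|\bbw_m\|^2\le 0$ (since $V_n\le V_m$) and is simply discarded. The loss piece is controlled by inserting the statistical loss $L$ and invoking \eqref{eqn_loss_minus_erm} twice, $\mathbb{E}|L_{n-m}(\bbw_m)-L_m(\bbw_m)|\le \mathbb{E}\sup_\bbw|L_{n-m}(\bbw)-L(\bbw)|+\mathbb{E}\sup_\bbw|L(\bbw)-L_m(\bbw)|\le V_{n-m}+V_m$. The key subtlety making this legitimate is that $\bbw_m$ is \emph{random} (it depends on $\S_m$), so one must pass to the supremum over $\bbw$ before taking expectations; this is exactly why \eqref{eqn_loss_minus_erm} is stated with a supremum. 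This yields the first half of the $\tfrac{2(n-m)}{n}$ factor.

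The third bracket $R_m(\bbw_m^*)-R_n(\bbw_n^*)$ is the crux. Here I would use optimality of $\bbw_m^*$ for $R_m$ to write $R_m(\bbw_m^*)\le R_m(\bbw_n^*)$ and then apply the identity at $\bbw=\bbw_n^*$, producing $\tfrac{n-m}{n}(L_m(\bbw_n^*)-L_{n-m}(\bbw_n^*))+\tfrac{c(V_m-V_n)}{2}\|\bbw_n^*\|^2$. The loss piece again contributes $\tfrac{n-m}{n}(V_{n-m}+V_m)$ (the second half of the factor) via the same supremum argument applied to the random $\bbw_n^*$, and the regularization piece already carries the correct coefficient $\tfrac{c(V_m-V_n)}{2}$, but evaluated at $\bbw_n^*$ rather than at $\bbw^*$. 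Converting that norm is the one genuinely non-routine step, and I expect it to be the main obstacle.

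I would resolve it with the auxiliary bound $\mathbb{E}\|\bbw_n^*\|^2\le\|\bbw^*\|^2+4/c$. This follows from $R_n(\bbw_n^*)\le R_n(\bbw^*)$, which rearranges to $\tfrac{cV_n}{2}\|\bbw_n^*\|^2\le \tfrac{cV_n}{2}\|\bbw^*\|^2+(L_n(\bbw^*)-L_n(\bbw_n^*))$, together with the estimate $\mathbb{E}[L_n(\bbw^*)-L_n(\bbw_n^*)]\le 2V_n$; the latter is obtained by inserting $L$, using $L(\bbw_n^*)\ge L(\bbw^*)$ (because $\bbw^*$ minimizes $L$ by \eqref{eqn_stat_loss}), and applying \eqref{eqn_loss_minus_erm} once more. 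Multiplying the resulting slack $4/c$ by the coefficient $\tfrac{c(V_m-V_n)}{2}$ produces exactly the additive term $2(V_m-V_n)$, while the base term $\tfrac{c(V_m-V_n)}{2}\|\bbw^*\|^2$ is the stated regularization contribution. Summing the three brackets then gives the claimed bound. The delicate points to get right are the direction of each optimality inequality and the consistent use of the supremum in \eqref{eqn_loss_minus_erm} whenever the evaluation point ($\bbw_m$ or $\bbw_n^*$) is data-dependent.
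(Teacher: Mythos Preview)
Your proposal is correct and mirrors the paper's proof almost exactly: the paper uses the same four-piece telescoping (your three brackets with the third split via $R_m(\bbw_m^*)\le R_m(\bbw_n^*)$), the same identity $L_n-L_m=\tfrac{n-m}{n}(L_{n-m}-L_m)$ bounded through \eqref{eqn_loss_minus_erm}, and the same auxiliary estimate $\mathbb{E}\|\bbw_n^*\|^2\le \|\bbw^*\|^2+4/c$ proved just as you describe. The only slip is the parenthetical ``with $n=2m$'': the proposition is stated for general $m<n$, and your identity already holds in that generality, so drop that restriction.
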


\begin{proof}
See Section \ref{proof_prop_1}.
\end{proof}

The result in Proposition \ref{main_theorem_444} characterizes the sub-optimality of the variable $\bbw_m$, which is an $\delta_m$ sub-optimal solution for the risk $R_m$, with respect to the empirical risk $R_n$ associated with the set $\S_n$. If we assume that the statistical accuracy $V_n$ is of the order $\mathcal{O}(1/n^{\alpha})$ and we double the size of the training set at each step, i.e., $n=2m$, then the inequality in \eqref{theorem_result_444} can be simplified to
\begin{equation}\label{sth}
\mathbb{E}[ R_{n}(\bbw_m) - R_n(\bbw_n^*)] \leq \delta_m+ \left[2+\left(1-\frac{1}{2^{\alpha}} \right)\left(2+\frac{c}{2}\|\bbw^*\|^2\right)\right]V_m.
\end{equation}

The expression in \eqref{sth} formalizes the reason that there is no need to solve the sub-problem $R_m$ beyond its statistical accuracy $V_m$. In other words, even if $\delta_m$ is zero the expected sub-optimality will be of the order $\mathcal{O}(V_m)$, i.e.,  $\mathbb{E}[R_{n}(\bbw_m) - R_n(\bbw_n^*)]= \mathcal{O}(V_m)$. Based on this observation, The required precision $\delta_m$ for solving the sub-problem $R_m$ should be of the order $\delta_m=\mathcal{O}(V_m)$.

%
{\linespread{1.3}
\begin{algorithm}[t] \begin{algorithmic}[1]
\STATE \textbf{Input:} Initial sample size $n=m_0$ and 
                       argument $\bbw_{n} = \bbw_{m_0}$ with 
                       $\| \nabla R_{n}(\bbw_n)\| \leq (\sqrt{2 c}) V_n$
\WHILE [main loop]{$n\leq N$} 
   \STATE Update argument and index:\ $\bbw_m=\bbw_n$ and $m=n$. 
   \STATE Increase sample size:\ $n=\min\{2 m, N\}$.
     \STATE Set the initial variable:\ $\tbw=\bbw_m$.  
   \WHILE{$\| \nabla R_{n}(\tbw)\| > (\sqrt{2 c}) V_n$} 
      \STATE 
             Update the variable $\tbw$: Compute $\tbw=$ Update($\tbw$,$\nabla R_{n}(\tbw)$)
         \ENDWHILE
         \STATE Set $\bbw_n=\tbw$.         
\ENDWHILE
\end{algorithmic}
\caption{{Adaptive Sample Size Mechanism}}\label{alg:AdaMethods} \end{algorithm}}

The steps of the proposed adaptive sample size scheme is summarized in Algorithm \ref{alg:AdaMethods}. Note that since computation of the sub-optimality $ R_{n}(\bbw_n)- R_{n}(\bbw_n^*)$ requires access to the minimizer $\bbw_n^*$, we replace the condition $ R_{n}(\bbw_n)- R_{n}(\bbw_n^*)\leq V_n$ by a bound on the norm of gradient $\|\nabla R_{n}(\bbw_n)\|^2$. The risk $R_n$ is strongly convex, and we can bound the suboptimality $R_{n}(\bbw_n)- R_{n}(\bbw_n^*)$ as
\begin{equation}\label{optimality_check}
   R_{n}(\bbw_n)- R_{n}(\bbw_n^*) \leq \frac{1}{2cV_n} \|\nabla R_{n}(\bbw_n)\|^2.
\end{equation}
Hence, at each stage, we stop updating the variable if the condition $\|\nabla R_{n}(\bbw_n)\| \leq (\sqrt{2 c}) V_n$ holds which implies $ R_{n}(\bbw_n)- R_{n}(\bbw_n^*)\leq V_n$. The intermediate variable $\tbw$ can be updated in Step 7 using any first-order method. We will discuss this procedure for accelerated gradient descent (AGD) and stochastic variance reduced gradient (SVRG) methods in Sections \ref{sec_ada_AGD} and \ref{sec_ada_svrg}, respectively.

%

\section{Complexity Analysis}\label{sec_comp_comp_analysis}

In this section, we aim to characterize the number of required iterations $s_n$ at each stage to solve the subproblems within their statistical accuracy. We derive this result for all linearly convergent first-order deterministic and stochastic methods. 

The inequality in \eqref{sth} not only leads to an efficient policy for the required precision $\delta_m$ at each step, but also provides an upper bound for the sub-optimality of the initial iterate, i.e., $\bbw_m$, for minimizing the risk $R_n$. Using this upper bound, depending on the iterative method of choice, we can characterize the number of required iterations $s_n$ to ensure that the updated variable is within the statistical accuracy of the risk $R_n$. To formally characterize the number of required iterations $s_n$, we first assume the following conditions are satisfied. 

\begin{assumption}\label{convexity_lip_assumption}
The loss functions $f(\bbw,\bbz)$ are convex with respect to $\bbw$ for all values of $\bbz$. Moreover, their gradients $\nabla f(\bbw,\bbz)$ are Lipschitz continuous with constant $M$
\begin{equation}\label{lip_ass_cond}
\|\nabla f(\bbw,\bbz)-\nabla f(\bbw',\bbz)\| \leq M \|\bbw-\bbw'\|,\qquad \text{for all $\bbz$.}
\end{equation}
\end{assumption}




The conditions in Assumption \ref{convexity_lip_assumption} imply that the average loss $L(\bbw)$ and the empirical loss $L_n(\bbw)$ are convex and their gradients are Lipschitz continuous with constant $M$. Thus, the empirical risk $R_n(\bbw)$ is strongly convex with constant $cV_n$ and its gradients $\nabla R_n(\bbw)$ are Lipschitz continuous with parameter $M+cV_n$.

So far we have concluded that each subproblem should be solved up to its statistical accuracy. This observation leads to an upper bound for the number of iterations needed at each step to solve each subproblem.
Indeed various descent methods can be executed for solving the sub-problem. Here we intend to come up with a general result that contains all descent methods that have a linear convergence rate when the objective function is strongly convex and smooth. In the following theorem, we derive a lower bound for the number of required iterations $s_n$ to ensure that the variable $\bbw_n$, which is the outcome of updating $\bbw_m$ by $s_n$ iterations of the method of interest, is within the statistical accuracy of the risk $R_n$ for any linearly convergent method.

%
\begin{theorem}\label{ada_gd_thm}
Consider the variable $\bbw_m$ as a $V_m$-suboptimal solution of the risk $R_{m}$ in expectation, i.e., $\mathbb{E}[R_{m}(\bbw_m)- R_{m}(\bbw_m^*)] \leq V_m$, where $V_m=\mathcal{O}(1/{m}^\alpha)$. Consider the sets $\ccalS_m\subset\ccalS_n\subset\ccalT$ such that $n=2 m $, and suppose Assumption \ref{convexity_lip_assumption} holds. Further, define $0\leq \rho_n <1$ as the linear convergence factor of the descent method used for updating the iterates. Then, the variable $\bbw_n$ generated based on the adaptive sample size mechanism satisfies $\mathbb{E}[R_{n}(\bbw_n)- R_{n}(\bbw_n^*)] \leq V_n$ if the number of iterations $s_n$ at the $n$-th stage is larger than 
\begin{equation}\label{claim_gd_1}
s_n\ \geq\ - \frac{\ {\log \left[3\times 2^\alpha+\left({2^{\alpha}}-{1} \right)\left(2+\frac{c}{2}\|\bbw^*\|^2\right)\right]}}{\log \rho_n}.
\end{equation}
\end{theorem}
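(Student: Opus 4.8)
The plan is to chain together three ingredients: the warm-start suboptimality bound already recorded in \eqref{sth}, the defining linear-convergence property of the descent method, and the scaling relation $V_m = 2^\alpha V_n$ that follows from $n=2m$ together with $V_n = \mathcal{O}(1/n^\alpha)$. None of these steps is individually hard; the content of the theorem is really just their correct assembly and the final logarithm.

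First I would specialize \eqref{sth} by taking $\delta_m = V_m$, which is legitimate because $\bbw_m$ is assumed to be a $V_m$-suboptimal solution of $R_m$ in expectation. This turns the bracket $2+(1-2^{-\alpha})(2+\tfrac{c}{2}\|\bbw^*\|^2)$ into $3+(1-2^{-\alpha})(2+\tfrac{c}{2}\|\bbw^*\|^2)$ and yields a bound on the suboptimality of the \emph{initial} iterate $\bbw_m$ for the new risk $R_n$:
\[
\mathbb{E}[R_n(\bbw_m)-R_n(\bbw_n^*)] \leq \left[3+\left(1-\frac{1}{2^{\alpha}}\right)\left(2+\frac{c}{2}\|\bbw^*\|^2\right)\right] V_m .
\]
Next I would invoke the linear-convergence hypothesis: running $s_n$ iterations of the chosen method on $R_n$ from the warm start $\bbw_m$ contracts the expected function-value suboptimality by $\rho_n^{s_n}$, so that
\[
\mathbb{E}[R_n(\bbw_n)-R_n(\bbw_n^*)] \leq \rho_n^{s_n}\,\mathbb{E}[R_n(\bbw_m)-R_n(\bbw_n^*)] .
\]

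Substituting the previous display and replacing $V_m$ by $2^\alpha V_n$ — valid since $V_n/V_m = (m/n)^\alpha = 2^{-\alpha}$ when $n=2m$ — collapses the prefactor to $3\cdot 2^{\alpha}+(2^{\alpha}-1)(2+\tfrac{c}{2}\|\bbw^*\|^2)$, giving a bound of the form $(\text{constant})\cdot\rho_n^{s_n}V_n$. To guarantee $\mathbb{E}[R_n(\bbw_n)-R_n(\bbw_n^*)]\leq V_n$ I would then require this constant times $\rho_n^{s_n}$ to be at most $1$, cancel $V_n$, take logarithms, and divide by $\log\rho_n<0$ (which flips the inequality). This produces exactly \eqref{claim_gd_1}.

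The delicate point is not the algebra but making sure the linear-convergence inequality is applied correctly. One must know that the contraction factor $\rho_n$ governs the expected \emph{objective gap} (not merely the iterate distance) and that it holds uniformly per iteration from the warm start, independent of the particular starting suboptimality. For the deterministic methods treated later (e.g.\ AGD) this is standard; for stochastic methods such as SVRG one must interpret $\rho_n$ as a per-epoch factor and count $s_n$ accordingly, a distinction I expect the subsequent method-specific sections to make precise. Given that reading of $\rho_n$, the bound \eqref{claim_gd_1} follows immediately from the three combined estimates.
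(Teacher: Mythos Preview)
Your proposal is correct and follows essentially the same route as the paper: specialize \eqref{sth} with $\delta_m=V_m$, apply the linear-convergence contraction $\rho_n^{s_n}$ in expectation, substitute $V_m=2^{\alpha}V_n$, and solve the resulting inequality by taking logarithms. Your remark about interpreting $\rho_n$ for stochastic methods also mirrors the paper's handling of that case.
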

%
\begin{proof}
See Section \ref{proof_thm_gd}.
\end{proof}

The result in Theorem \ref{ada_gd_thm} characterizes the number of required iterations at each phase. Depending on the linear convergence factor $\rho_n$ and the parameter $\alpha$ for the order of statistical accuracy, the number of required iterations might be different. Note that the parameter $\rho_n$ might depend on the size of the training set directly or through the dependency of the problem condition number on $n$. It is worth mentioning that the result in \eqref{claim_gd_1} shows a lower bound for the number of required iteration which means that $s_n= \lfloor - ({\ {\log \left[3\times 2^\alpha+\left({2^{\alpha}}-{1} \right)\left(2+({c}/{2})\|\bbw^*\|^2\right)\right]}}/{\log \rho_n}) \rfloor + 1$  is the exact number of iterations needed when minimizing $R_n$, where $\lfloor a \rfloor$ indicates the floor of $a$. To characterize the overall computational complexity of the proposed adaptive sample size scheme, the exact expression for the linear convergence constant $\rho_n$ is required. In the following section, we focus on two deterministic and stochastic methods and characterize their overall computational complexity to reach the statistical accuracy of the full training set $\ccalT$.

\subsection{Adaptive Sample Size Accelerated Gradient (Ada AGD)}\label{sec_ada_AGD}

The accelerated gradient descent (AGD) method, also called as Nesterov's method, is a long-established descent method which achieves the optimal convergence rate for first-order deterministic methods. In this section, we aim to combine the update of AGD with the adaptive sample size scheme in Section \ref{sec_ada_methods} to improve convergence guarantees of AGD for solving ERM problems. This can be done by using AGD for updating the iterates in step 7 of Algorithm \ref{alg:AdaMethods}. Given an iterate $\bbw_m$ within the statistical accuracy of the set $\ccalS_m$, the adaptive sample size accelerated gradient descent method (Ada AGD) requires $s_n$ iterations of AGD to ensure that the resulted iterate $\bbw_n$ lies in the statistical accuracy of $\ccalS_n$.
  In particular, if we initialize the sequences $\tbw$ and $\tby$ as $\tbw_0=\tby_0=\bbw_m$, the approximate solution $\bbw_n$ for the risk $R_n$ is the outcome of the updates
\begin{equation}\label{AdaNes_1}
\tbw_{k+1}=\tby_k - \eta_n \nabla R_n (\tby_k),
\end{equation} 
and 
\begin{equation}\label{AdaNes_2}
\tby_{k+1}=\tbw_{k+1} + \beta_n (\tbw_{k+1}-\tbw_{k})
\end{equation} 
 after $s_n$ iterations, i.e., $\bbw_n=\tbw_{s_n}$. The parameters $\eta_n$
 and $\beta_n$ are indexed by $n$ since they depend on the number of samples. We use the convergence rate of AGD to characterize the number of required iterations $s_n$ to guarantee that the outcome of the recursive updates in \eqref{AdaNes_1} and \eqref{AdaNes_2} is within the statistical accuracy of $R_n$.

%
\begin{theorem}\label{ada_nes_thm}
Consider the variable $\bbw_m$ as a $V_m$-optimal solution of the risk $R_{m}$ in expectation, i.e., $\mathbb{E}[R_{m}(\bbw_m)- R_{m}(\bbw_m^*)] \leq V_m$, where $V_m=\gamma/m^\alpha$. Consider the sets $\ccalS_m\subset\ccalS_n\subset\ccalT$ such that $n=2 m $, and suppose Assumption \ref{convexity_lip_assumption} holds. Further, set the parameters $\eta_n$ and $\beta_n$ as
\begin{equation}
 \eta_n=\frac{1}{cV_n+M}\qquad  \text{and}\qquad  \beta_n=\frac{\sqrt{cV_n+M}-\sqrt{cV_n}}{\sqrt{cV_n+M}+\sqrt{cV_n}}.
\end{equation} 
Then, the variable $\bbw_n$ generated based on the update of  Ada AGD in \eqref{AdaNes_1}-\eqref{AdaNes_2} satisfies $\mathbb{E}[R_{n}(\bbw_n)- R_{n}(\bbw_n^*)] \leq V_n$ if the number of iterations $s_n$ is larger than
\begin{equation}\label{claim_nes_1}
{s_n} \geq \sqrt{ \frac{n^\alpha M+c\gamma}{c\gamma}} \log \left[6\times 2^\alpha+\left({2^{\alpha}}-{1} \right)\left(4+c\|\bbw^*\|^2\right)\right].
\end{equation}
Moreover, if we define $m_0$ as the size of the first training set, to reach the statistical accuracy $V_N$ of the full training set $\ccalT$ the overall computational complexity of Ada GD is given by 
\begin{equation}\label{claim_nes_2}
N \left[ 1+\log_2\left(\frac{N}{m_0}\right)+\left(\frac{\sqrt{2^{\alpha}}}{\sqrt{2^{\alpha}}-1} \right)\sqrt{\frac{{N}^\alpha M}{c\gamma}}\ \right] \log \left[6\times 2^\alpha+\left({2^{\alpha}}-{1} \right)\left(4+c\|\bbw^*\|^2\right)\right] .\end{equation}
\end{theorem}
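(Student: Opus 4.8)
The plan is to specialize the general iteration-count estimate of Theorem \ref{ada_gd_thm} to AGD by inserting its precise linear rate, and then to sum the per-stage costs over the geometrically growing sample sizes.

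For the first claim \eqref{claim_nes_1}, I would start from the standard convergence guarantee of AGD with constant momentum on a function that is $\mu_n$-strongly convex and $L_n$-smooth. By Assumption \ref{convexity_lip_assumption} the risk $R_n$ has $\mu_n = cV_n$ and $L_n = M + cV_n$, and the chosen $\eta_n,\beta_n$ are exactly the optimal AGD parameters for these constants, giving the per-iteration factor $\rho_n = 1 - \sqrt{\mu_n/L_n} = 1 - \sqrt{cV_n/(M+cV_n)}$. The AGD estimate bounds $R_n(\tbw_k) - R_n(\bbw_n^*)$ by $\rho_n^k$ times $\big(R_n(\tbw_0) - R_n(\bbw_n^*) + (\mu_n/2)\|\tbw_0 - \bbw_n^*\|^2\big)$; absorbing the distance term through strong convexity, $(\mu_n/2)\|\tbw_0-\bbw_n^*\|^2 \le R_n(\tbw_0) - R_n(\bbw_n^*)$, this becomes $2\rho_n^k\big(R_n(\tbw_0)-R_n(\bbw_n^*)\big)$. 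This extra factor of $2$ is precisely what turns the constant $3\times 2^\alpha + (2^\alpha-1)(2 + \frac{c}{2}\|\bbw^*\|^2)$ of Theorem \ref{ada_gd_thm} into $6\times 2^\alpha + (2^\alpha-1)(4 + c\|\bbw^*\|^2)$.

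Next I would feed in the initial suboptimality. With $\tbw_0 = \bbw_m$ and $\delta_m = V_m$, inequality \eqref{sth} gives $\mathbb{E}[R_n(\bbw_m) - R_n(\bbw_n^*)] \le [3 + (1-2^{-\alpha})(2 + \frac{c}{2}\|\bbw^*\|^2)]V_m$, and substituting $V_m = 2^\alpha V_n$ (which holds because $V_n = \gamma/n^\alpha$ and $n = 2m$) collapses this to $[3\times 2^\alpha + (2^\alpha-1)(2 + \frac{c}{2}\|\bbw^*\|^2)]V_n$. Imposing $2\rho_n^{s_n}\times(\text{this bound}) \le V_n$ and taking logarithms yields $s_n \ge \log[6\times 2^\alpha + (2^\alpha-1)(4 + c\|\bbw^*\|^2)]/(-\log\rho_n)$. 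The final move is to replace $-\log\rho_n$ by the cleaner lower bound from $-\log(1-x)\ge x$, namely $-\log\rho_n \ge \sqrt{cV_n/(M+cV_n)}$, so that $1/(-\log\rho_n) \le \sqrt{(M+cV_n)/(cV_n)} = \sqrt{(n^\alpha M + c\gamma)/(c\gamma)}$; this produces \eqref{claim_nes_1}. For the overall complexity \eqref{claim_nes_2}, I would sum the work across the stages $n = m_0, 2m_0, \ldots, N$, of which there are $1 + \log_2(N/m_0)$. Each AGD iteration on $R_n$ costs $n$ component-gradient evaluations, so stage $n$ costs $n\,s_n$; bounding $n \le N$ and using $s_n \le \sqrt{(n^\alpha M + c\gamma)/(c\gamma)}\,C + 1$ with $C := \log[6\times 2^\alpha + (2^\alpha-1)(4 + c\|\bbw^*\|^2)]$, the total is at most $N\sum_n s_n$. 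The constant terms contribute the $(1 + \log_2(N/m_0))$ piece, while $\sum_n \sqrt{(n^\alpha M+c\gamma)/(c\gamma)}$ is a geometric series in $n^{\alpha/2}$ dominated by its largest term $N^{\alpha/2}$; the common ratio $2^{\alpha/2}=\sqrt{2^\alpha}$ gives the closed-form factor $\sqrt{2^\alpha}/(\sqrt{2^\alpha}-1)$, yielding the $\frac{\sqrt{2^\alpha}}{\sqrt{2^\alpha}-1}\sqrt{N^\alpha M/(c\gamma)}$ term after multiplying by $N$ and $C$.

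The main obstacle I anticipate is the first part: correctly pinning down the AGD convergence statement so that (i) the factor of $2$, obtained when the distance-plus-gap potential is converted into a pure optimality gap, appears and accounts for the doubling of the logarithmic constant, and (ii) the per-iteration contraction $\rho_n$ is handled through $-\log(1-x)\ge x$ to yield the $\sqrt{L_n/\mu_n}$ scaling cleanly. The second part is then a routine geometric summation, where the only care needed is the crude $n\le N$ bound that produces the stated $N\big(1+\log_2(N/m_0)\big)$ form rather than a $\Theta(N)$ term.
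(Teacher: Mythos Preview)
Your proposal is correct and follows essentially the same route as the paper. For \eqref{claim_nes_1} the paper, like you, invokes the AGD estimate with the potential $R_n(\bbw_m)-R_n(\bbw_n^*)+\tfrac{\mu_n}{2}\|\bbw_m-\bbw_n^*\|^2$, absorbs the distance term via strong convexity to pick up the factor $2$, reruns the argument of Theorem~\ref{ada_gd_thm} with the doubled constant, and then replaces $-\log(1-1/\sqrt{\kappa_n})$ by $1/\sqrt{\kappa_n}$ to obtain the $\sqrt{(n^\alpha M+c\gamma)/(c\gamma)}$ prefactor; for \eqref{claim_nes_2} it performs the same geometric summation of the per-stage iteration counts, and your explicit use of the crude bound $n\le N$ is in fact what produces the $N\big(1+\log_2(N/m_0)\big)$ term in the stated complexity (the paper's written sum omits the $n$ factor but the final expression matches your computation).
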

\begin{proof}
See Section \ref{proof_thm_nes}.
\end{proof}
 
The result in Theorem \ref{ada_nes_thm} characterizes the number of required iterations $s_n$ to achieve the statistical accuracy of $R_n$. Moreover, it shows that to reach the accuracy $V_N=\mathcal{O}(1/N^\alpha)$ for the risk $R_N$ accosiated to the full training set $\ccalT$, the total computational complexity of Ada AGD is of the order $\mathcal{O} \left(N^{(1+\alpha/2)}\right)$. Indeed, this complexity is lower than the overall computational complexity of AGD for reaching the same target which is given by $\mathcal{O}\left(N\sqrt{\kappa_N}\log({N}^\alpha)\right)=\mathcal{O}\left(N^{(1+\alpha/2)}\log({N}^\alpha)\right)$. Note that this bound holds for AGD since the condition number $\kappa_N:=(M+cV_N)/(cV_N)$ of the risk $R_N$ is of the order $\mathcal{O}(1/V_N)= \mathcal{O}(N^\alpha)$.
  
\subsection{Adaptive Sample Size SVRG (Ada SVRG)}\label{sec_ada_svrg}

For the adaptive sample size mechanism presented in Section \ref{sec_ada_methods}, we can also use linearly convergent \textit{stochastic} methods such as stochastic variance reduced gradient (SVRG) in \cite{johnson2013accelerating} to update the iterates. The SVRG method succeeds in reducing the computational complexity of deterministic first-order methods by computing a single gradient per iteration and using a \textit{delayed} version of the average gradient to update the iterates. Indeed, we can exploit the idea of SVRG to develop low computational complexity adaptive sample size methods to improve the performance of deterministic adaptive sample size algorithms. Moreover, the adaptive sample size variant of SVRG (Ada SVRG) enhances the proven bounds for SVRG to solve ERM problems.  


We proceed to extend the idea of adaptive sample size scheme to the SVRG algorithm. To do so, consider $\bbw_m$ as an iterate within the statistical accuracy, $\mathbb{E}[R_{m}(\bbw_m) - R_{m}(\bbw_m^*) ]\leq V_m$, for a set $\ccalS_m$ which contains $m$ samples. Consider $s_n$ and $q_n$ as the numbers of outer and inner loops for the update of SVRG, respectively, when the size of the training set  is $n$. Further, consider $\tbw$ and $\hbw$ as the sequences of iterates for the outer and inner loops of SVRG, respectively. In the adaptive sample size SVRG (Ada SVRG) method to minimize the risk $R_n$, we set the approximate solution $\bbw_m$ for the previous ERM problem as the initial iterate for the outer loop, i.e., $\tbw_0=\bbw_m$. Then, the outer loop update which contains gradient computation is defined as
\begin{equation}\label{svrg_outer_update}
\nabla R_n (\tbw_{k})  = \frac{1}{n}\sum_{i=1}^{n} \nabla f (\tbw_k,z_i) + cV_n \tbw_k  \qquad \for\qquad  k=0, \dots, s_n-1,
\end{equation}
and the inner loop for the $k$-th outer loop contains $q_n$ iterations of the following update
\begin{equation}\label{svrg_inner_update}
\hbw_{t+1,k}=\hbw_{t,k}-\eta_n\ \! \left(\nabla f (\hbw_{t,k},z_{i_t}) +cV_n\hbw_{t,k} -\nabla f (\tbw_k,z_{i_t}) -cV_n\tbw_{k}+\nabla R_n (\tbw_{k})  \right),
\end{equation}
for $t=0,\dots, q_n-1$, where the iterates for the inner loop at step $k$ are initialized as $\hbw_{0,k}=\tbw_k$, and $i_t$ is index of the function which is chosen unfirmly at random from the set $\{1,\dots,n\}$ at the inner iterate $t$. The outcome of each inner loop $\hbw_{q_n,k}$ is used as the variable for the next outer loop, i.e., $\tbw_{k+1}=\hbw_{q_n,k}$. We define the outcome of $s_n$ outer loops $\tbw_{s_n}$ as the approximate solution for the risk $R_n$, i.e., $\bbw_n=\tbw_{s_n}$. 

In the following theorem we derive a bound on the number of required outer loops $s_n$ to ensure that the variable $\bbw_n$ generated by the updates in \eqref{svrg_outer_update} and \eqref{svrg_inner_update} will be in the statistical accuracy of $R_n$ in expectation, i.e., $\mathbb{E}[R_{n}(\bbw_n) - R_{n}(\bbw_n^*) ]\leq V_n$. To reach the smallest possible lower bound for $s_n$, we properly choose the number of inner loop iterations $q_n$ and the learning rate $\eta_n$.


%
\begin{theorem}\label{ada_svrg_thm}
Consider the variable $\bbw_m$ as a $V_m$-optimal solution of the risk $R_{m}$, i.e., a solution such that $\mathbb{E}[ R_{m}(\bbw_m)- R_{m}(\bbw_m^*)] \leq V_m$, where $V_m=\mathcal{O}(1/m^\alpha)$. Consider the sets $\ccalS_m\subset\ccalS_n\subset\ccalT$ such that $n=2 m $, and suppose Assumption \ref{convexity_lip_assumption} holds. Further, set the number of inner loop iterations as $q_n=n$ and the learning rate as $\eta_n=0.1/(M+cV_n)$. Then, the variable $\bbw_n$ generated based on the update of  Ada SVRG in \eqref{svrg_outer_update}-\eqref{svrg_inner_update} satisfies $\mathbb{E}[R_{n}(\bbw_n)- R_{n}(\bbw_n^*)] \leq V_n$ if the number of iterations $s_n$ is larger than
\begin{equation}\label{claim_ada_svrg_1}
{s_n}\ \geq\ {\log_2 \left[3\times 2^\alpha+\left({2^{\alpha}}-{1} \right)\left(2+\frac{c}{2}\|\bbw^*\|^2\right)\right]}.
\end{equation} 
Moreover, to reach the statistical accuracy $V_N$ of the full training set $\ccalT$ the overall computational complexity of Ada SVRG is given by 
\begin{equation}\label{claim_ada_svrg_2}
4N\ \log_2 \left[3\times 2^\alpha+\left({2^{\alpha}}-{1} \right)\left(2+\frac{c}{2}\|\bbw^*\|^2\right)\right].
\end{equation} 
\end{theorem}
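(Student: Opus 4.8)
The plan is to reduce the statement to the abstract per-stage bound already available and then pin down the SVRG contraction factor $\rho_n$ for the prescribed parameters. First I would control the suboptimality of the warm start. Since the outer loop is initialized at $\tbw_0=\bbw_m$ and $\bbw_m$ is $V_m$-optimal, I set $\delta_m=V_m$ in \eqref{sth}; using $n=2m$ together with $V_m=2^\alpha V_n$ this collapses to
\[
\mathbb{E}[R_n(\tbw_0)-R_n(\bbw_n^*)]\;\leq\; C_\alpha\,V_n,\qquad C_\alpha:=3\times 2^\alpha+(2^\alpha-1)\Big(2+\tfrac{c}{2}\|\bbw^*\|^2\Big),
\]
so the initial error is a fixed multiple $C_\alpha$ of the target accuracy $V_n$. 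Note that $C_\alpha$ is exactly the bracket appearing in \eqref{claim_ada_svrg_1}.

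Next I would invoke the linear convergence of SVRG. Under Assumption \ref{convexity_lip_assumption} the risk $R_n$ is $cV_n$-strongly convex and $(M+cV_n)$-smooth, so the analysis of \cite{johnson2013accelerating} gives, for each outer loop, $\mathbb{E}[R_n(\tbw_{k+1})-R_n(\bbw_n^*)]\leq \rho_n\,\mathbb{E}[R_n(\tbw_k)-R_n(\bbw_n^*)]$ with
\[
\rho_n=\frac{1}{cV_n\,\eta_n\,(1-2(M+cV_n)\eta_n)\,q_n}+\frac{2(M+cV_n)\eta_n}{1-2(M+cV_n)\eta_n}.
\]
The decisive step, and the main obstacle, is to show that the choices $\eta_n=0.1/(M+cV_n)$ and $q_n=n$ force $\rho_n\leq 1/2$. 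Substituting $\eta_n$ makes $(M+cV_n)\eta_n=0.1$, so the second term equals $1/4$ and the first term reduces to $\kappa_n/(0.08\,n)$ with $\kappa_n:=(M+cV_n)/(cV_n)$. This is at most $1/4$ exactly when $\kappa_n\leq 0.02\,n$; because $V_n=\mathcal{O}(1/n^\alpha)$ with $\alpha\leq 1$ we have $\kappa_n=\mathcal{O}(n^\alpha)$, so the inner-loop length $q_n=n$ dominates the condition number and the inequality holds (for $n$ beyond a constant depending on $M$, $c$, $\gamma$). Verifying this numeric threshold cleanly is the delicate part, since everything downstream depends on obtaining $\rho_n\leq 1/2$ rather than a generic $\rho_n<1$.

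With $\rho_n\leq 1/2$ established, the rest is immediate. Iterating the contraction over $s_n$ outer loops and using the warm-start bound gives $\mathbb{E}[R_n(\bbw_n)-R_n(\bbw_n^*)]\leq \rho_n^{s_n}C_\alpha V_n\leq 2^{-s_n}C_\alpha V_n$, which is $\leq V_n$ precisely when $s_n\geq \log_2 C_\alpha$; this is \eqref{claim_ada_svrg_1}, and it is equivalently the $-\log C_\alpha/\log\rho_n$ bound of Theorem \ref{ada_gd_thm} specialized to $\rho_n=1/2$.

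Finally, for the total cost \eqref{claim_ada_svrg_2} I would count gradient evaluations per stage: one full gradient costs $n$ evaluations and, storing the reference gradients $\nabla f(\tbw_k,z_i)$ during that pass, the $q_n=n$ inner iterations cost $n$ more, for $2n$ per outer loop and hence $2n\,s_n=2n\log_2 C_\alpha$ per stage. Summing over the doubling schedule $n\in\{m_0,2m_0,\dots,N\}$ turns the per-stage costs into a geometric series with $\sum_n n=2N-m_0\leq 2N$, which yields the claimed overall complexity $4N\log_2 C_\alpha$.
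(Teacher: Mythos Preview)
Your proposal is correct and follows essentially the same route as the paper's proof: invoke the warm-start suboptimality bound $C_\alpha V_n$ from Proposition~\ref{main_theorem_444}/\eqref{sth}, plug the standard SVRG contraction factor with $\eta_n=0.1/(M+cV_n)$ and $q_n=n$ to obtain $\rho_n=\kappa_n/(0.08n)+1/4\leq 1/2$ under the condition $\kappa_n\leq 0.02\,n$, deduce $s_n\geq\log_2 C_\alpha$, and then sum the per-stage cost $2n\,s_n$ over the geometric schedule to reach $4N\log_2 C_\alpha$. The only cosmetic difference is that you spell out the storage of the reference gradients to justify the $2n$ count and write the geometric sum as $2N-m_0$, whereas the paper states the same count directly and writes the sum as $2m_0(2^{q+1}-1)$; the arguments are otherwise identical.
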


%
\begin{proof} See Section \ref{proof_ada_svrg_thm}.\end{proof}

%
The result in \eqref{claim_ada_svrg_1} shows that the minimum number of outer loop iterations for Ada SVRG is equal to $s_n=\lfloor {\log_2 [3\times 2^\alpha+({2^{\alpha}}-{1} )(2+({c}/{2})\|\bbw^*\|^2)]} \rfloor +1$. This bound leads to the result in \eqref{claim_ada_svrg_2} which shows that the overall  computational complexity of Ada SVRG to reach the statistical accuracy of the full training set $\ccalT$ is of the order $\mathcal{O}(N)$. This bound not only improves the bound $\mathcal{O}(N^{1+\alpha/2})$ for Ada AGD, but also enhances the complexity of SVRG for reaching the same target accuracy which is given by $\mathcal{O}((N+\kappa)\log(N^\alpha) )=\mathcal{O}(N\log(N^\alpha) )$.


\section{Experiments}
In this section, we compare the adaptive sample size versions of a group of first-order methods, including gradient descent (GD), accelerated gradient descent (AGD), and stochastic variance reduced gradient (SVRG) with their standard (fixed sample size) versions. 
We first compare their performances on the RCV1 dataset, and then repeat for the experiments 
the MNIST dataset. 

We use $N=10,000$ samples of the RCV1 dataset as the training set and the remaining $10,242$ as the test set. The number of features in each sample is $p=47,236$. In our experiments, we use logistic loss. The constant $c$ should be within the order of gradients Lipschitz continuity constant $M$, and, therefore, we set it as $c=1$ since the samples are normalized and $M=1$. The size of the initial training set for adaptive methods is $m_0=400$. In our experiments we assume $\alpha=0.5$ and therefore the added regularization term is $(1/\sqrt{n})\|\bbw\|^2$.

 \begin{figure}[t]
	\centering
          \includegraphics[width=0.325\linewidth]{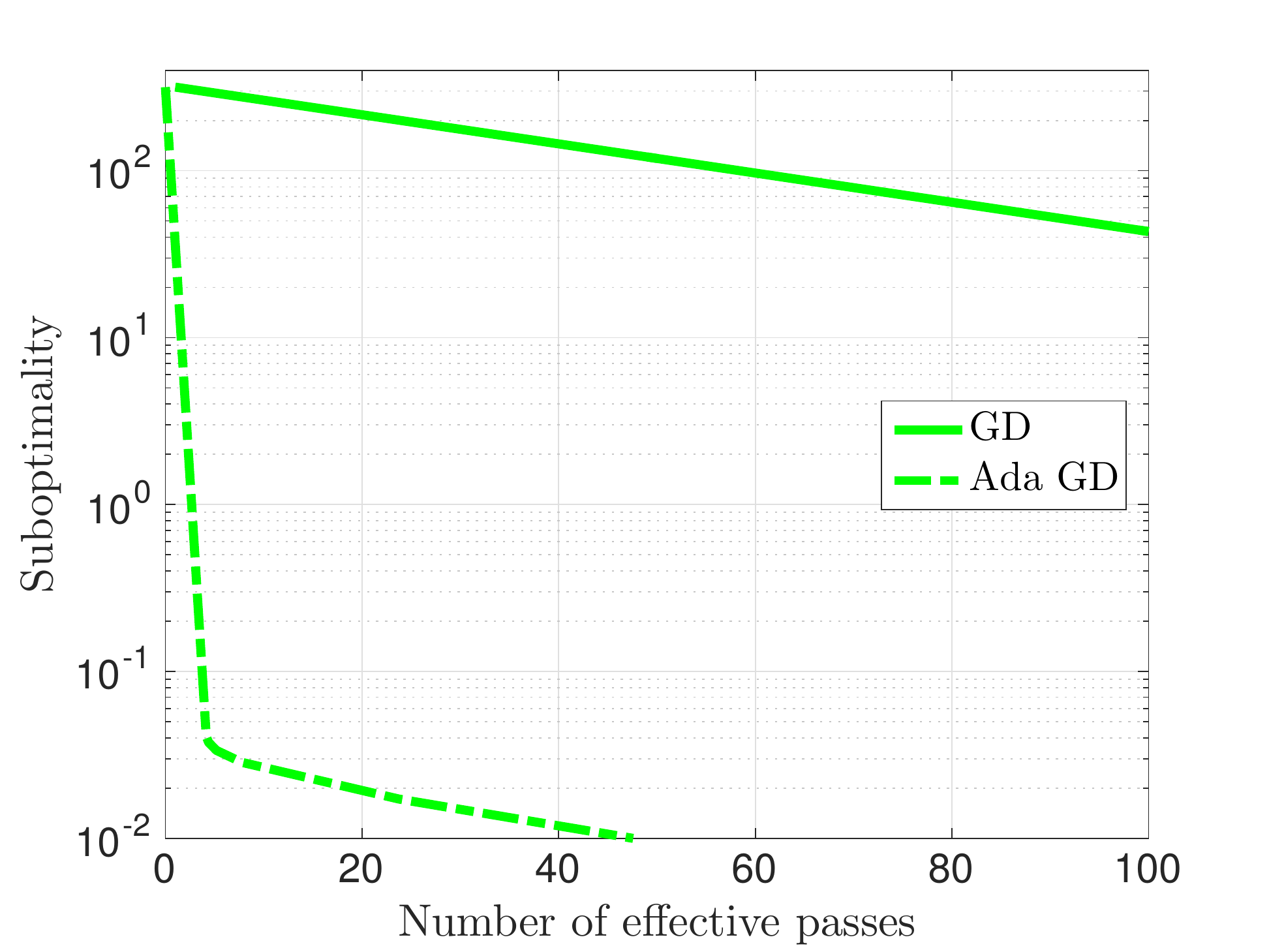}
            \includegraphics[width=0.325\linewidth]{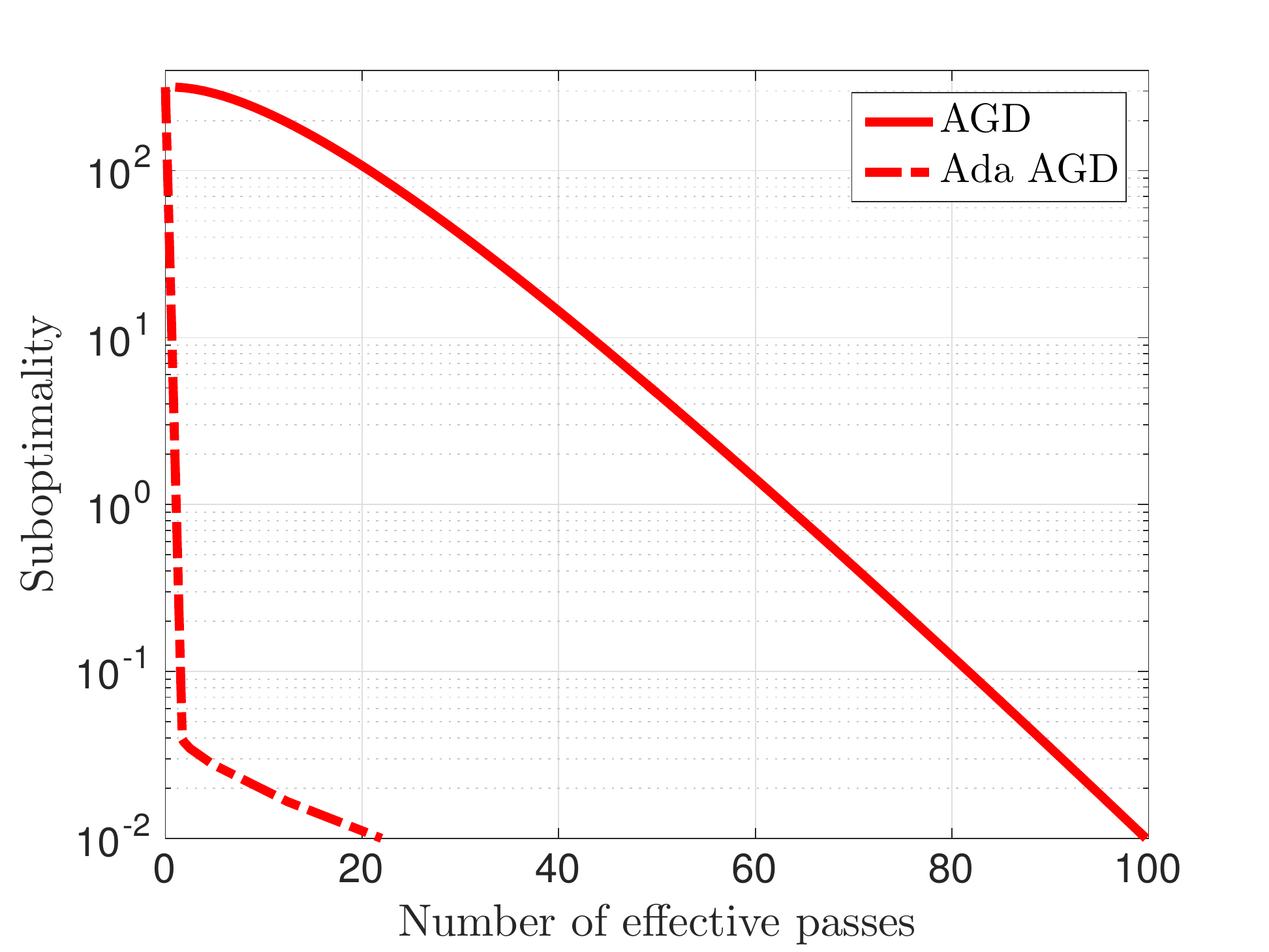}
             \includegraphics[width=0.325\linewidth]{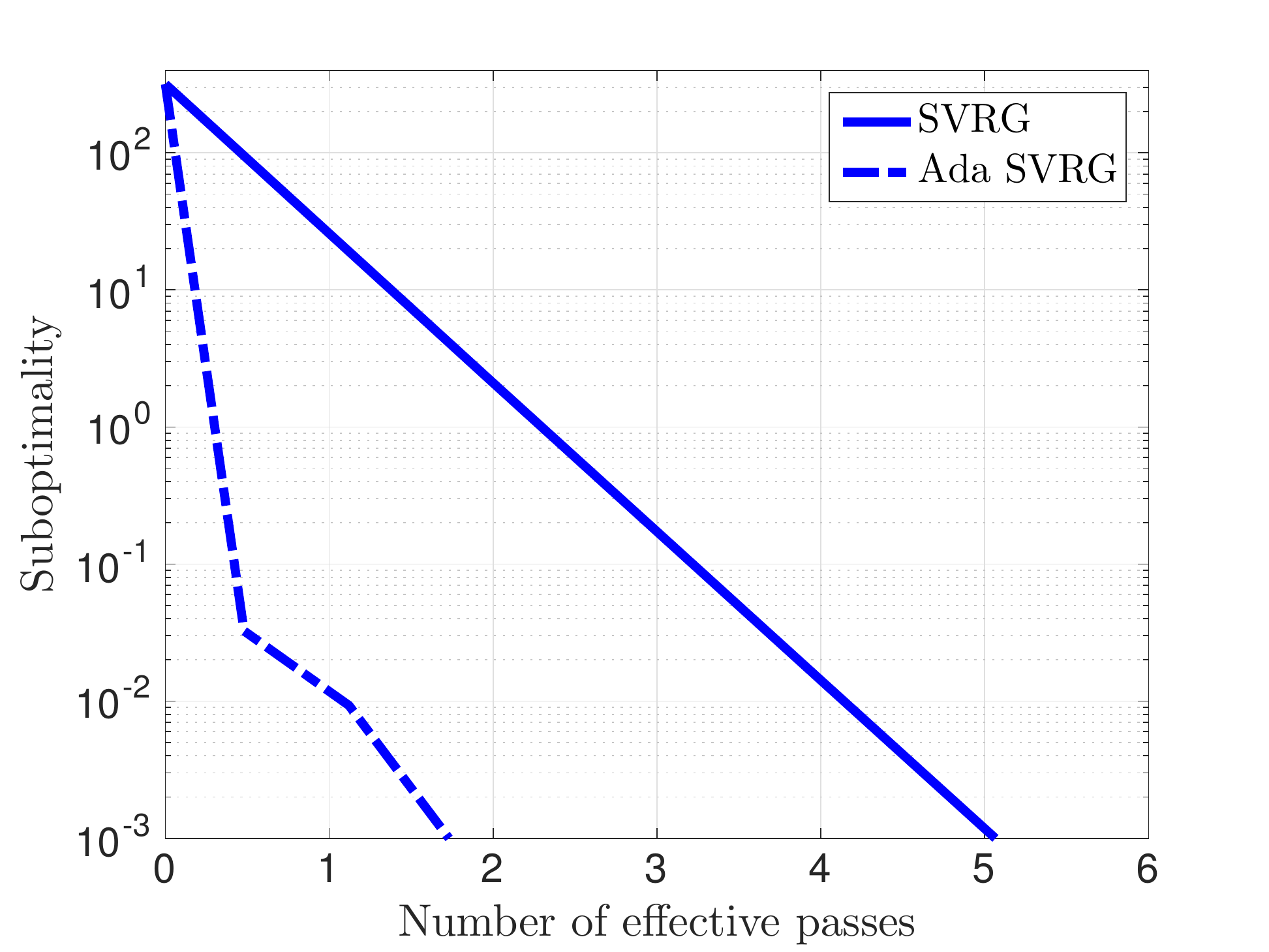}
          \caption{\small{Comparison of GD, AGD, and SVRG with their adaptive sample size versions in terms of suboptimality vs. number of effective passes for RCV1 dataset with regularization of the order $\mathcal{O}(1/\sqrt{n})$.}}
          \label{fig1}
          \vspace{0mm}
\end{figure}

 \begin{figure}[t]
	\centering
          \includegraphics[width=0.325\linewidth]{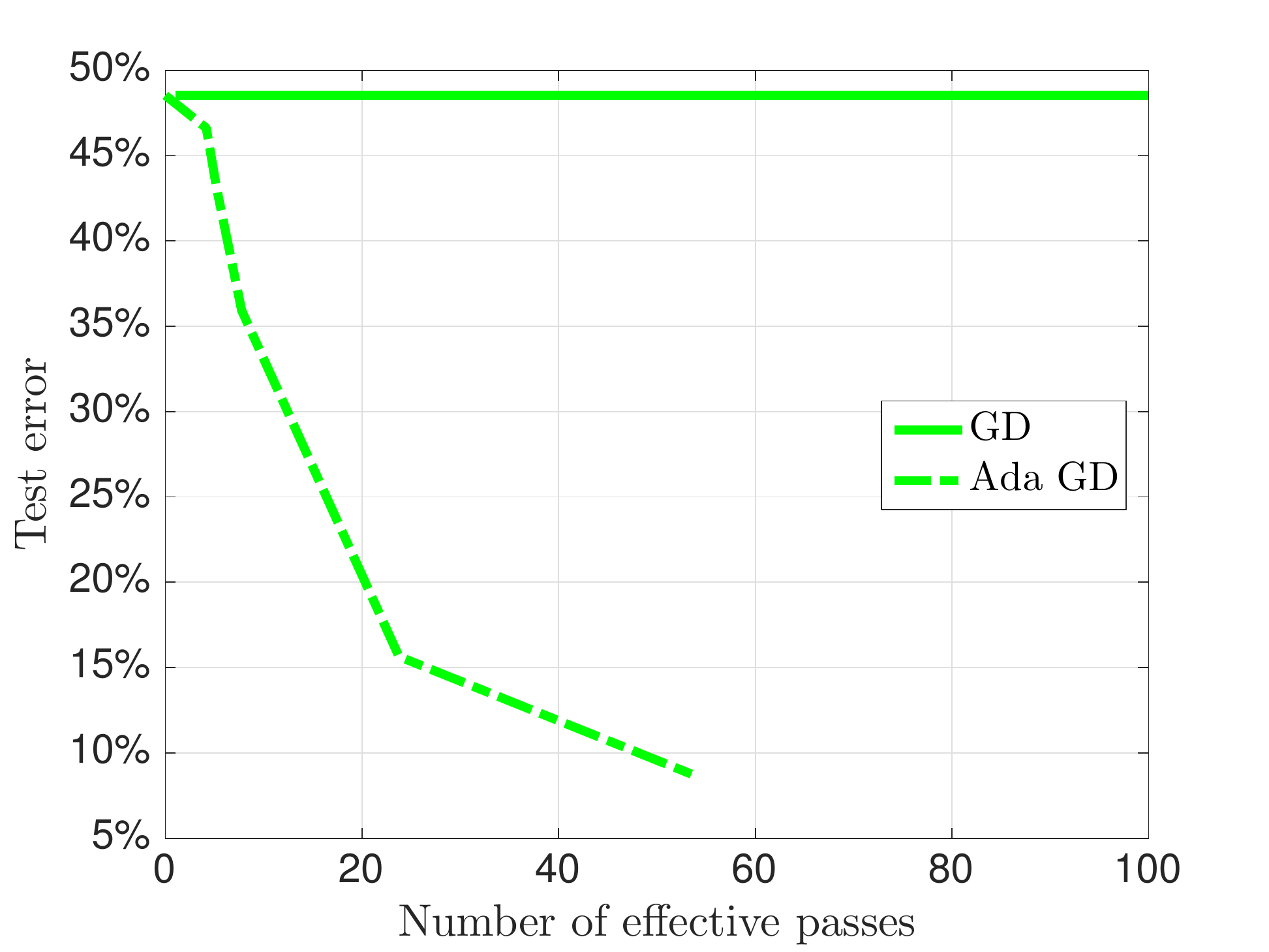}
            \includegraphics[width=0.325\linewidth]{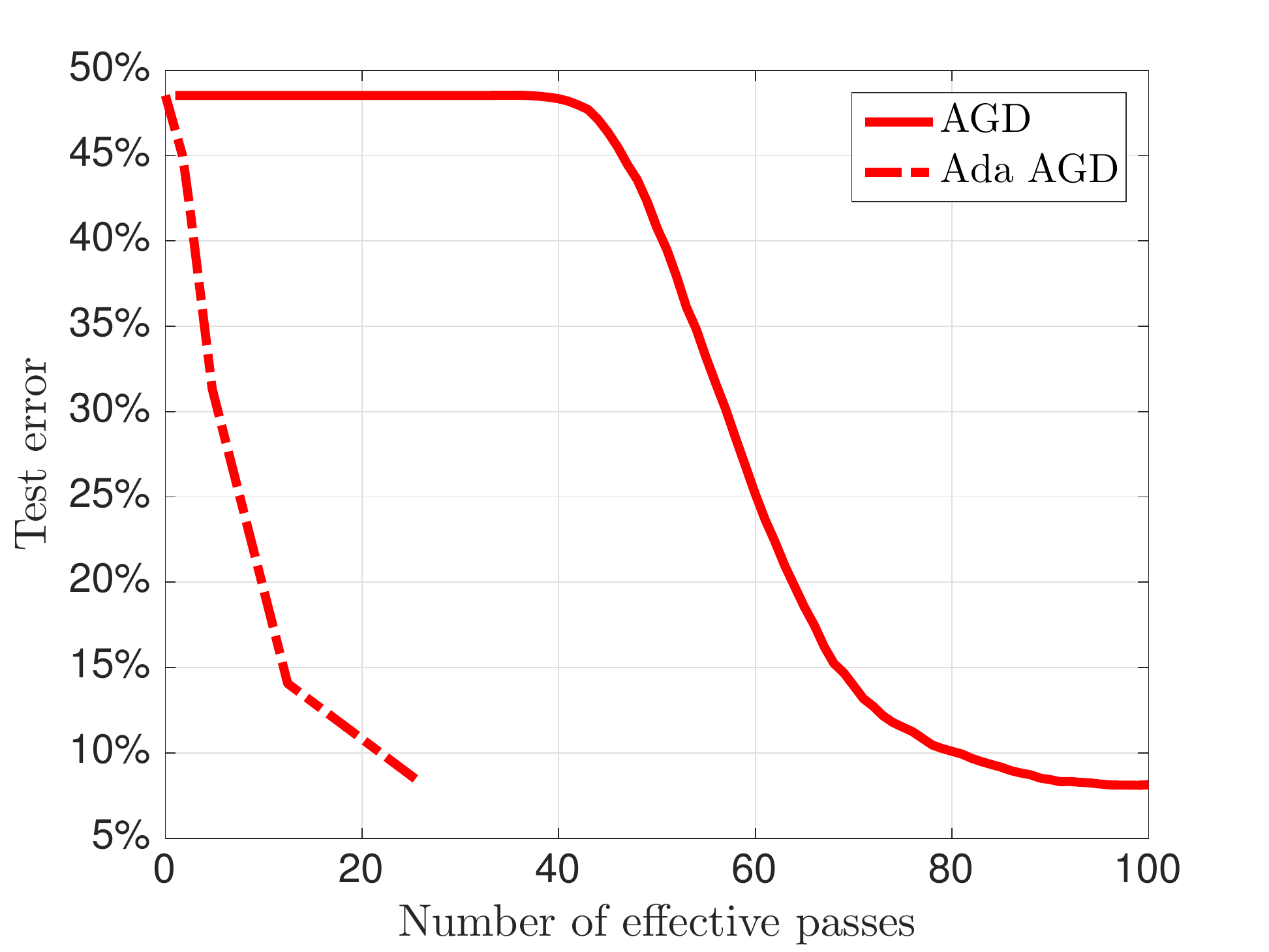}
             \includegraphics[width=0.325\linewidth]{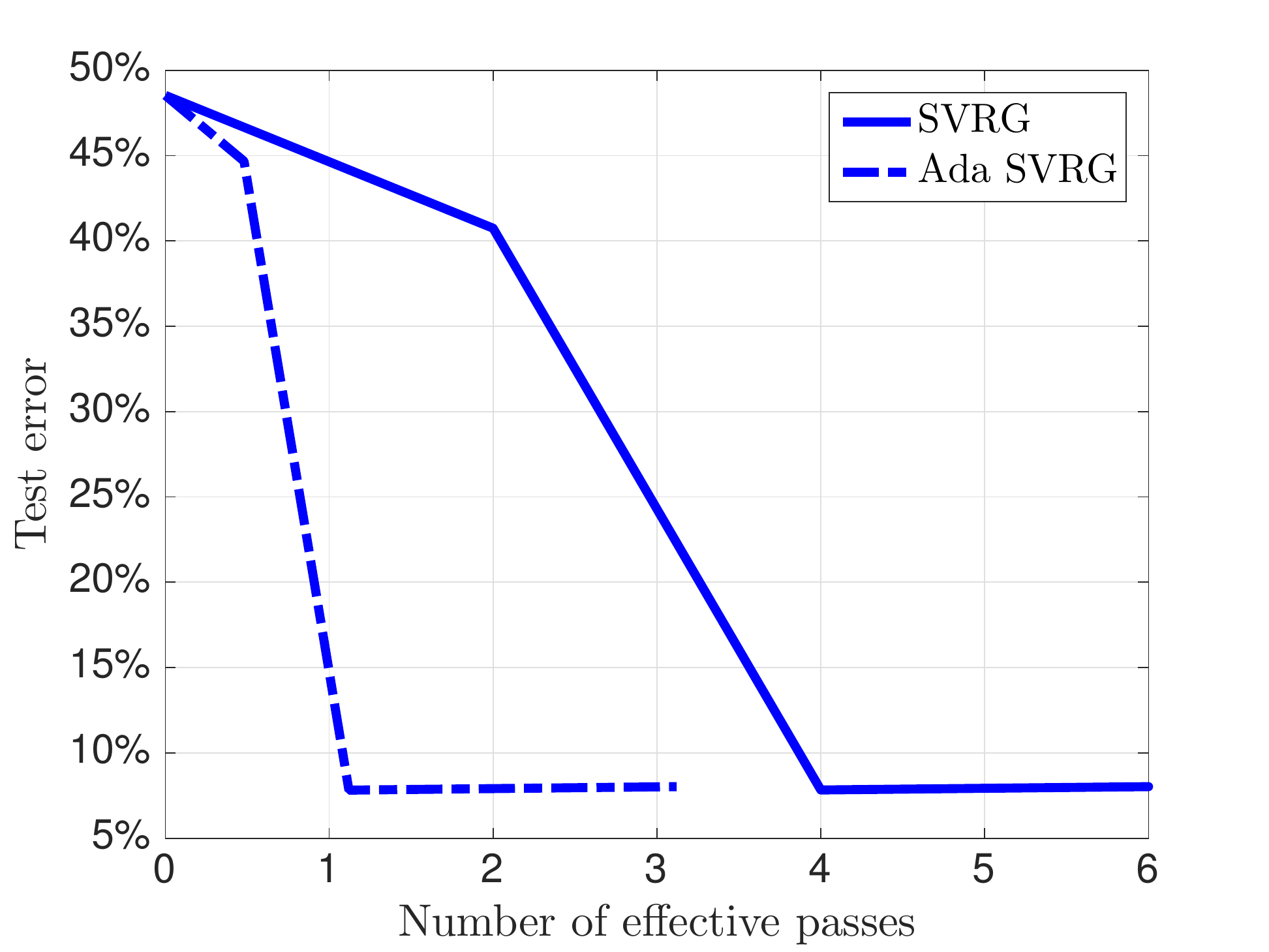}
          \caption{\small{Comparison of GD, AGD, and SVRG with their adaptive sample size versions in terms of test error vs. number of effective passes for RCV1 dataset with regularization of the order $\mathcal{O}(1/\sqrt{n})$.}}
          \label{fig2}
          \vspace{0mm}
\end{figure}

The plots in Figure \ref{fig1} compare the suboptimality of GD, AGD, and SVRG with their adaptive sample size versions. As our theoretical results suggested, we observe that the adaptive sample size scheme reduces the overall computational complexity of all of the considered linearly convergent first-order methods. If we compare the test errors of GD, AGD, and SVRG with their adaptive sample size variants, we reach the same conclusion that the adaptive sample size scheme reduces the overall computational complexity to reach the statistical accuracy of the full training set. In particular, the left plot in Figure \ref{fig2} shows that Ada GD approaches the minimum test error of $8\%$ after $55$ effective passes, while GD can not improve the test error even after $100$ passes. Indeed, GD will reach lower test error if we run it for more iterations. The central plot in Figure \ref{fig2} showcases that Ada AGD reaches $8\%$ test error about $5$ times faster than AGD. This is as predicted by $\log(N^\alpha)=\log(100)=4.6$. The right plot in Figure \ref{fig2} illustrates a similar improvement for Ada SVRG.

 Now we focus on the MNIST dataset containing images of dimension $p=784$. Since we are interested in a binary classification problem we only use the samples corresponding to digits $0$ and $8$, and, therefore, the number of samples is $11,774$. We choose $N=6,000$ of these samples randomly and use them as the training set and use the remaining $5,774$ samples as the test set. We use the logistic loss to evaluate the performance of the classifier and normalize the samples to ensure that the constant for the Lipschitz continuity of the gradients is $M=1$. In our experiments we consider two different scenarios. First we compare GD, AGD, and SVRG with their adaptive sample size versions when the additive regularization term is of order $1/\sqrt{n}$. Then, we redo the experiments for a regularization term of order $1/{n}$.
 
 The plots in Figure \ref{fig111} compare the suboptimality of GD, AGD, and SVRG with Ada GD, Ada AGD, and Ada SVRG when the regularization term in $(1/\sqrt{n})\|\bbw\|^2$. Note that in this case the statistical accuracy should be order of $\mathcal{O}(1/\sqrt{n})$ and therefore we are interested in the number of required iterations to achieve the suboptimality of order $10^{-2}$. As we observe Ada GD reach this target accuracy almost $6$ times faster than GD. The improvement for Ada AGD and Ada SVRG is less significant, but they still reach the suboptimality of $10^{-2}$ significantly faster than their standard (fixed sample size) methods. Figure \ref{fig222} illustrates the test error of GD, AGD, SVRG, Ada GD, Ada AGD, and Ada SVRG versus the number of effective passes over the dataset when the added regularization is of the order $\mathcal{O}(1/\sqrt{n})$. Comparison of these methods in terms of test error also support the gain in solving subproblems sequentially instead of minimizing the ERM corresponding to the full training set directly. In particular, for all three methods, the adaptive sample size version reaches the minimum test error of $2.5\%$ faster than the fixed sample size version. 
 
 We also run the same experiments for the case that the regularization term is order $1/{n}$. Figure \ref{fig333} shows the suboptimality of GD, AGD, and SVRG and their adaptive sample size version for the MNIST dataset when $V_n$ is assumed to be $\mathcal{O}(1/n)$. We expect from our theoretical achievements the advantage of using adaptive sample size scheme in this setting should be more significant, since $\log(N)$ is twice the value of $\log(\sqrt{N})$.  Figure  \ref{fig333} fulfills this expectation by showing that Ada GD, Ada AGD, and Ada SVRG are almost $10$ times faster than GD, AGD, and SVRG, respectively. Figure \ref{fig444} demonstrates the test error of these methods versus the number of effective passes for a regularization of order $\mathcal{O}(1/n)$. In this case, this case all methods require more passes to achieve the minimum test error comparing to the case that regularization is of order $\mathcal{O}(1/n)$. Interestingly, the minimum accuracy in this case is equal to $1\%$ which is lower than $2.5\%$ for the previous setting. Indeed, the difference between the number of required passes to reach the minimum test error for adaptive sample size methods and their standard version is more significant since the factor $\log(N^\alpha)$ is larger.

 \begin{figure*}[t]
	\centering
          \includegraphics[width=0.325\linewidth]{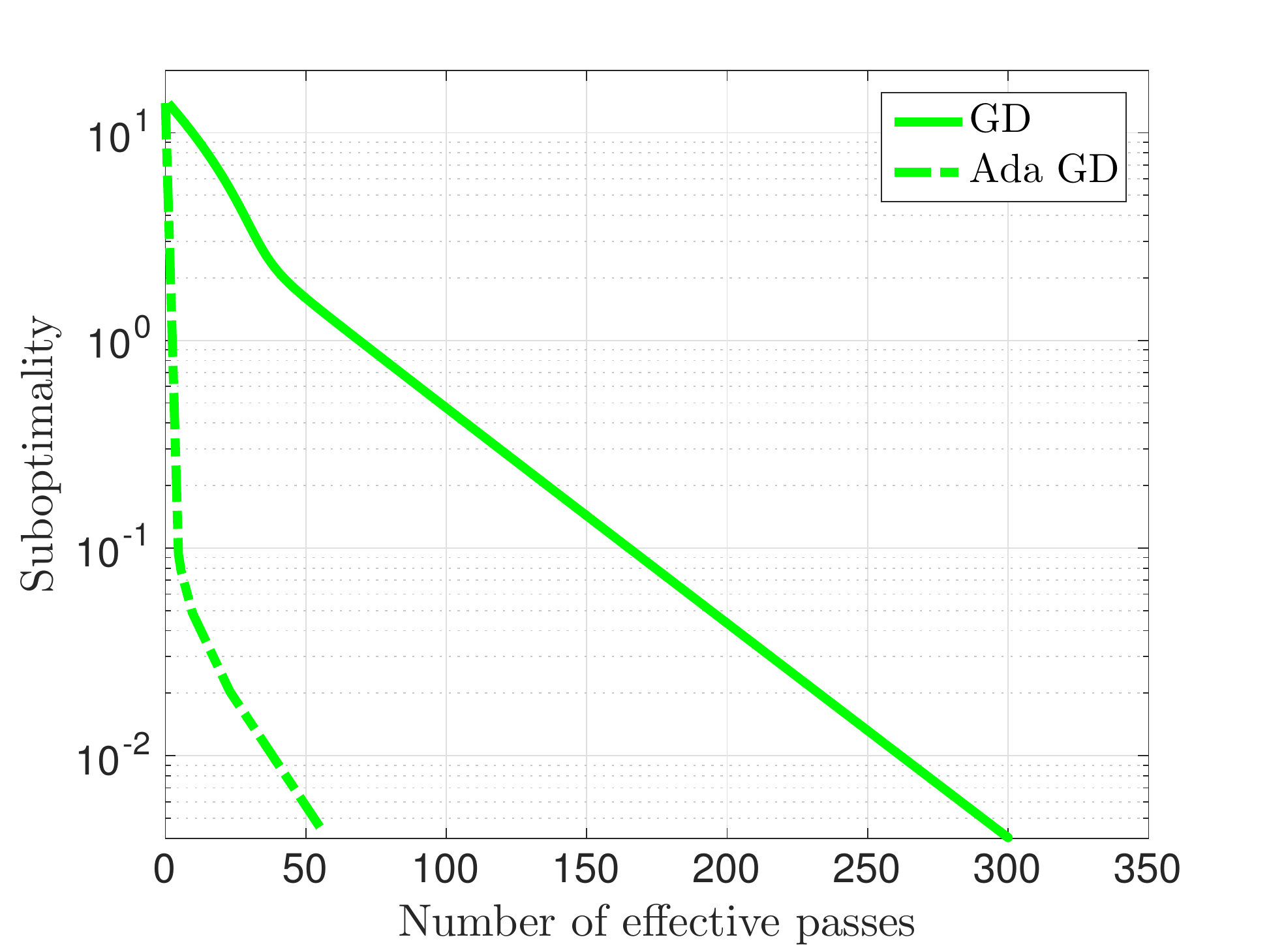}
            \includegraphics[width=0.325\linewidth]{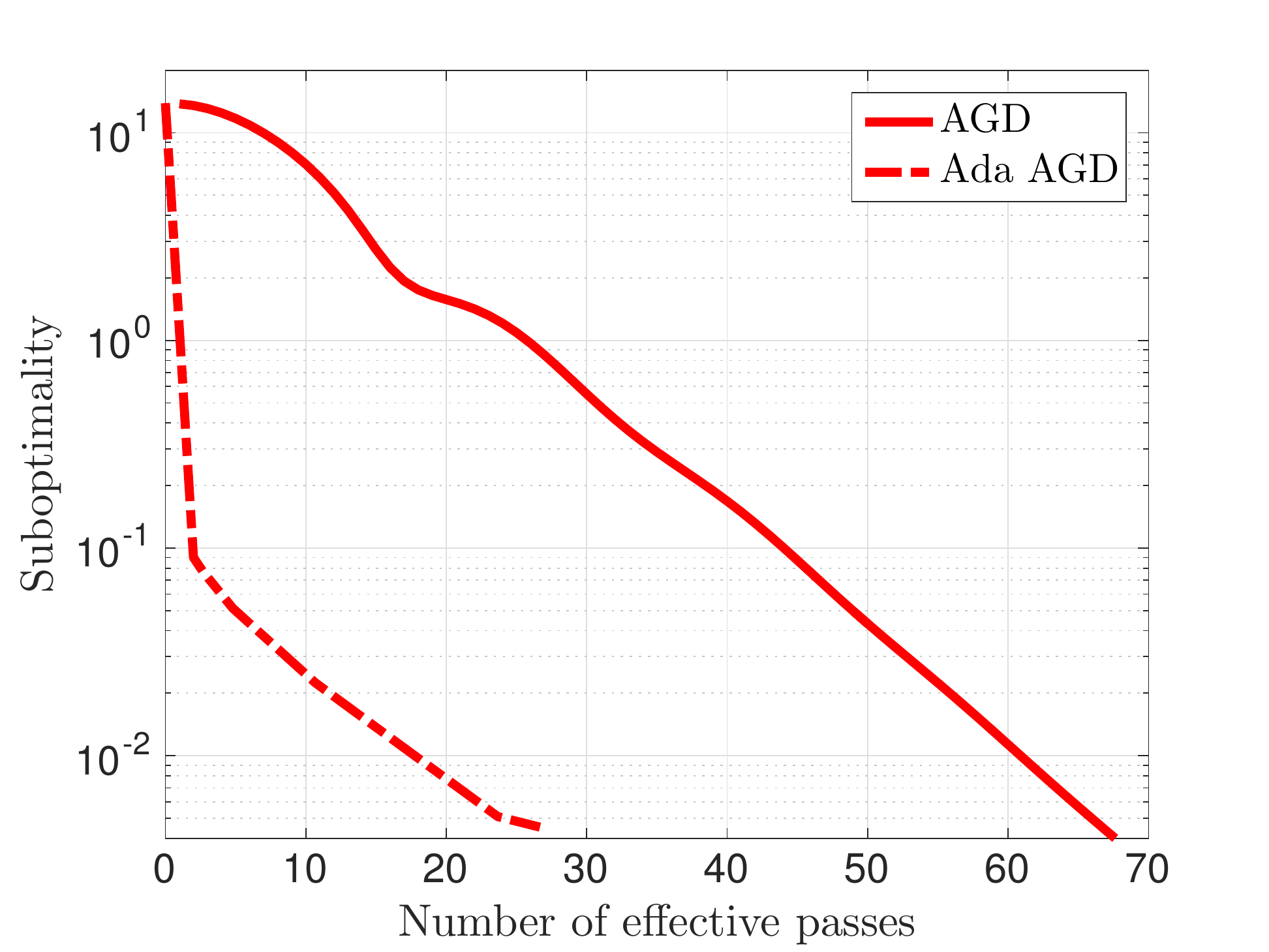}
             \includegraphics[width=0.325\linewidth]{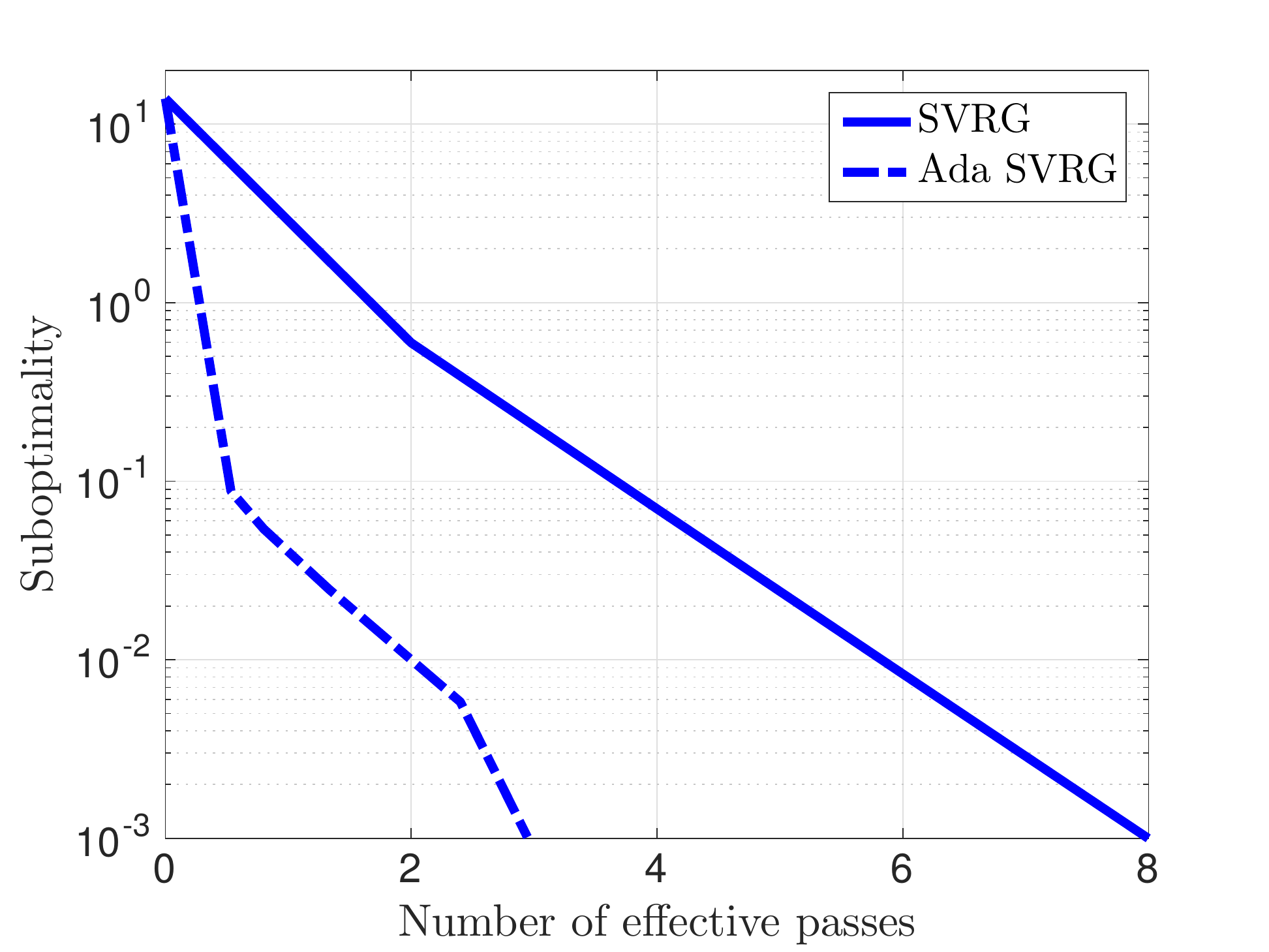}
          \vspace{-2mm}
          \caption{\small{Comparison of GD, AGD, and SVRG with their adaptive sample size versions in terms of suboptimality vs. number of effective passes for MNIST dataset with regularization of the order $\mathcal{O}(1/\sqrt{n})$.}}
          \label{fig111}
          \vspace{-4mm}
\end{figure*}

 \begin{figure}[t]
	\centering
          \includegraphics[width=0.325\linewidth]{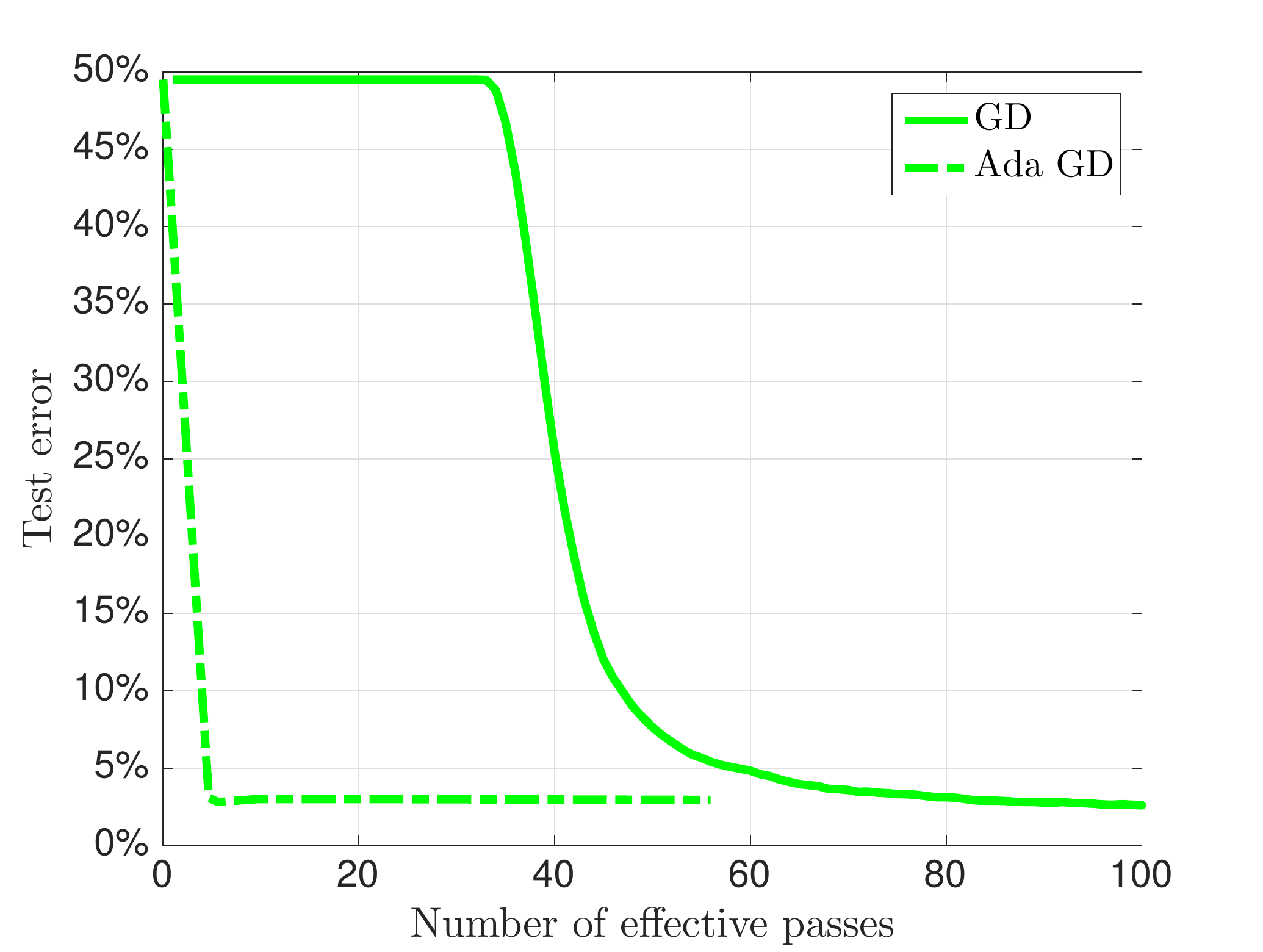}
            \includegraphics[width=0.325\linewidth]{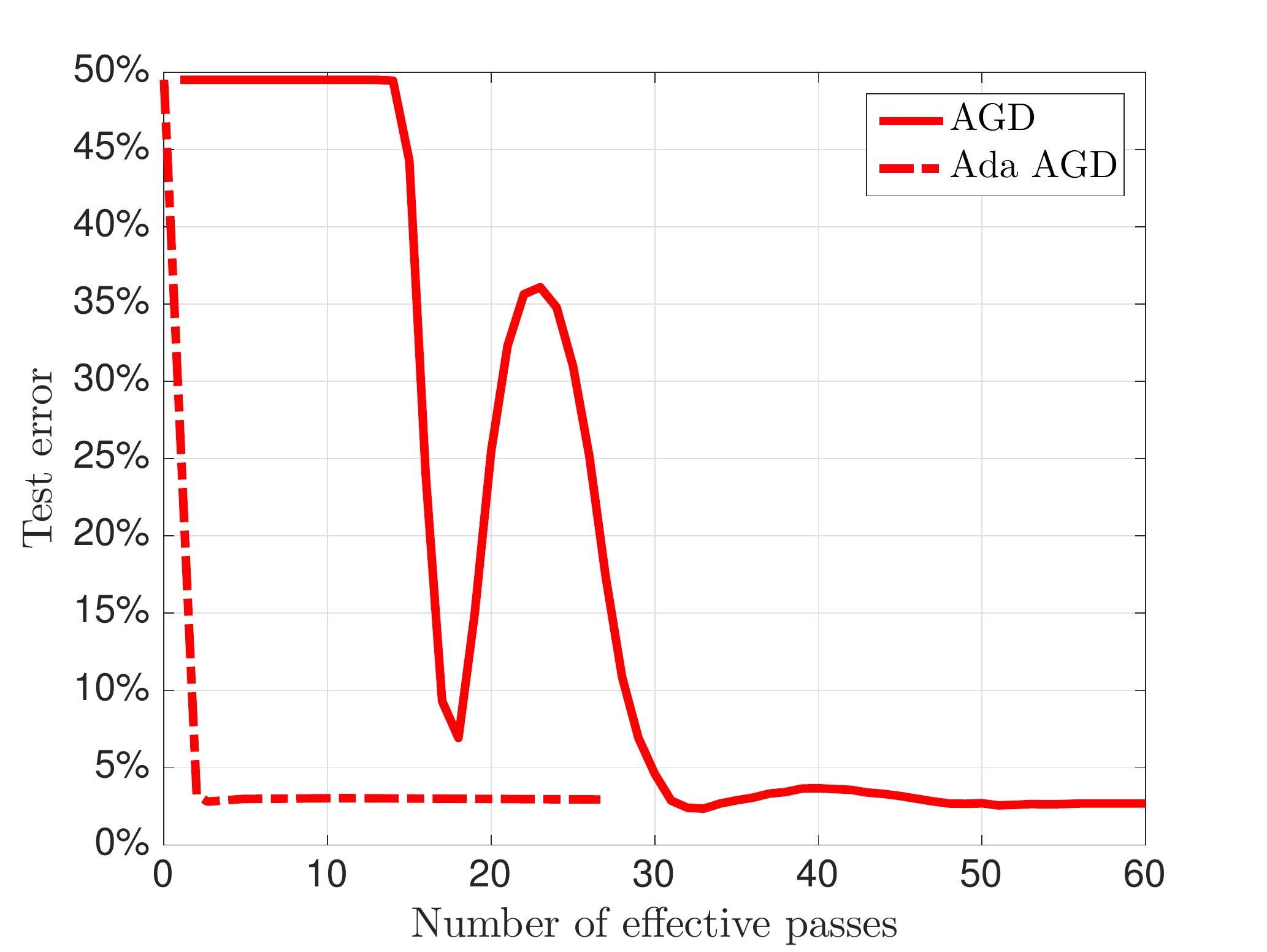}
             \includegraphics[width=0.325\linewidth]{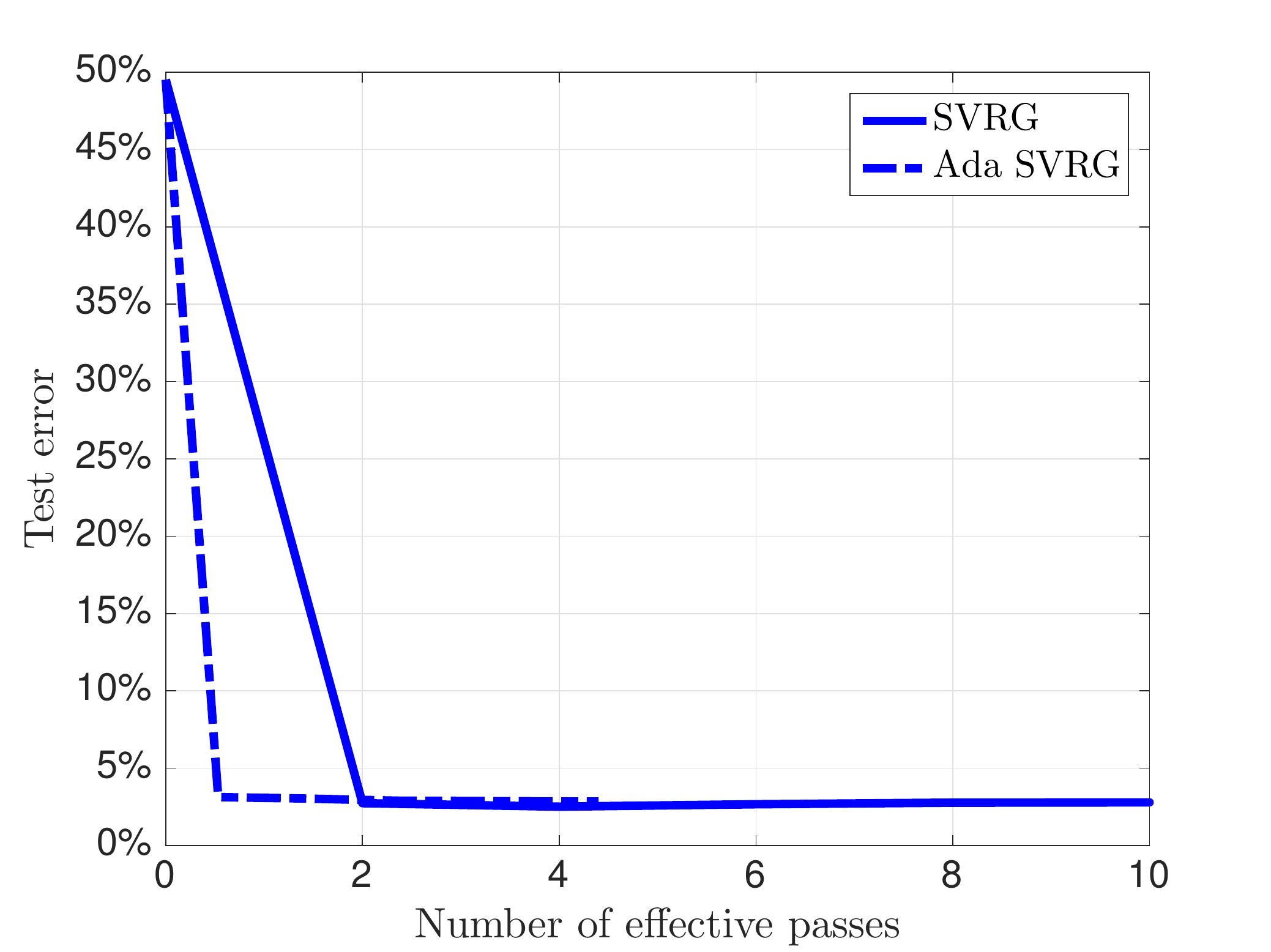}
\vspace{-2mm}
          \caption{\small{Comparison of GD, AGD, and SVRG with their adaptive sample size versions in terms of test error vs. number of effective passes for MNIST dataset with regularization of the order $\mathcal{O}(1/\sqrt{n})$.}}
          \label{fig222}
          \vspace{-4mm}
\end{figure}
 \begin{figure}[t]
	\centering
          \includegraphics[width=0.325\linewidth]{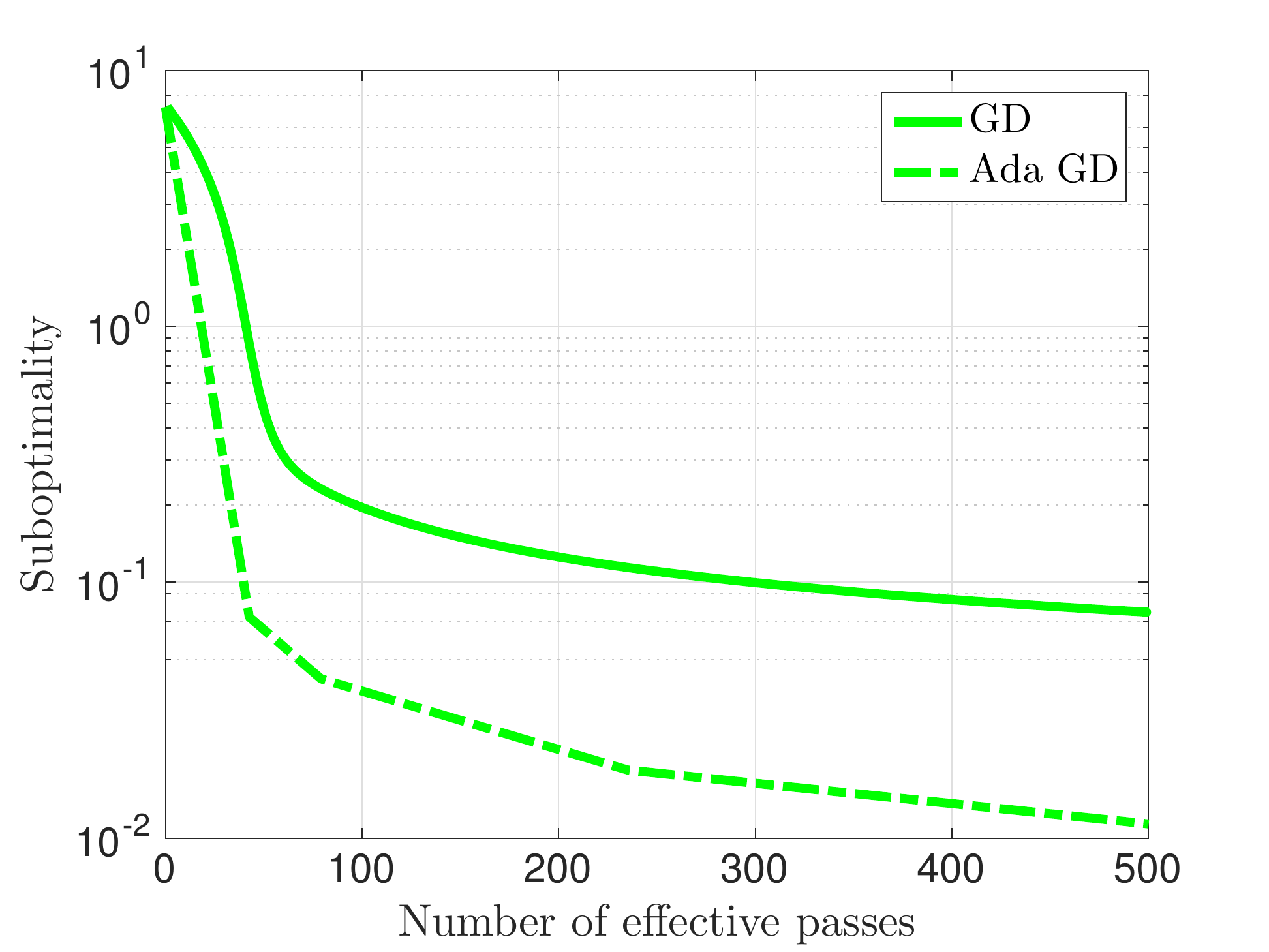}
            \includegraphics[width=0.325\linewidth]{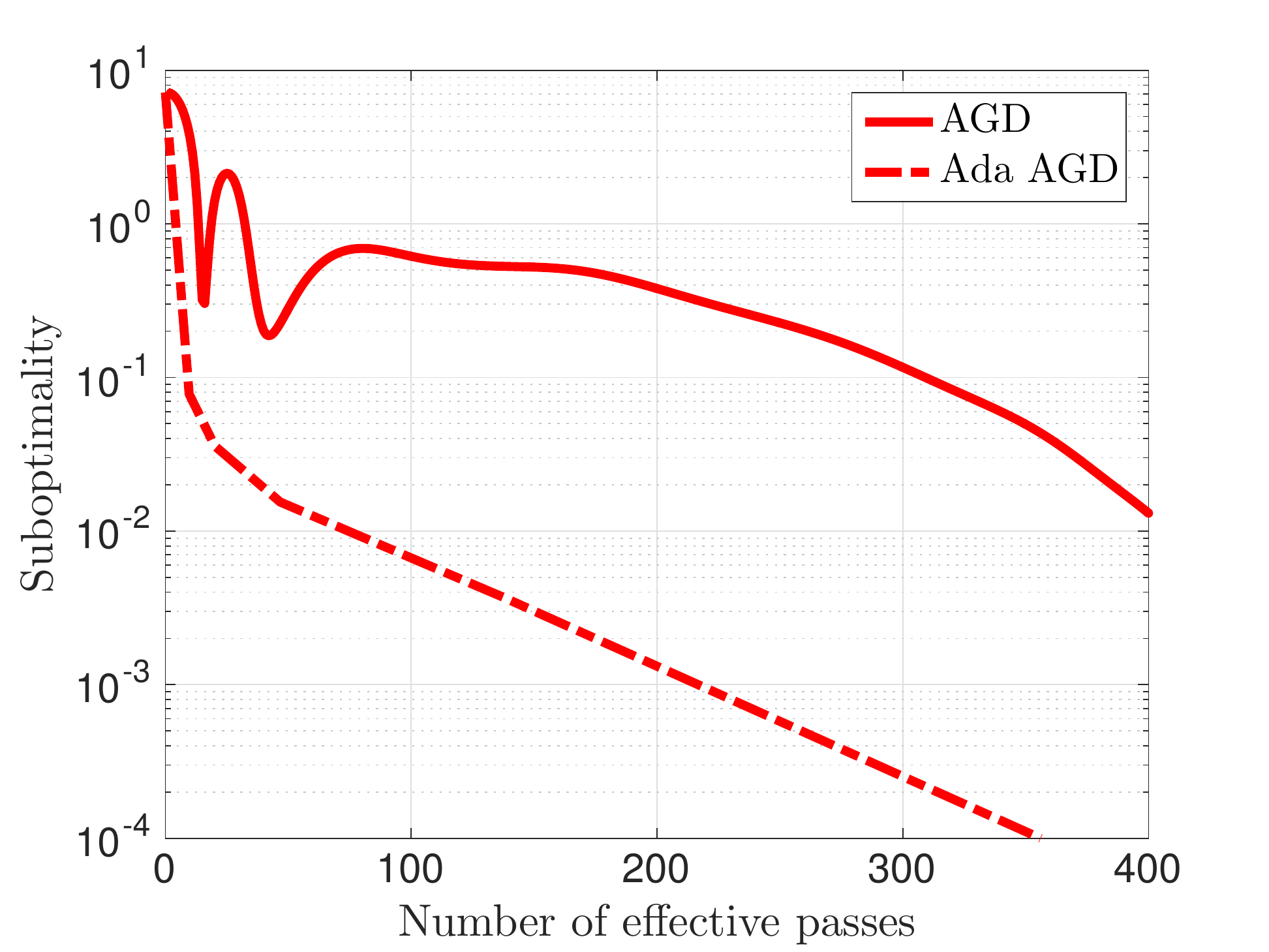}
             \includegraphics[width=0.325\linewidth]{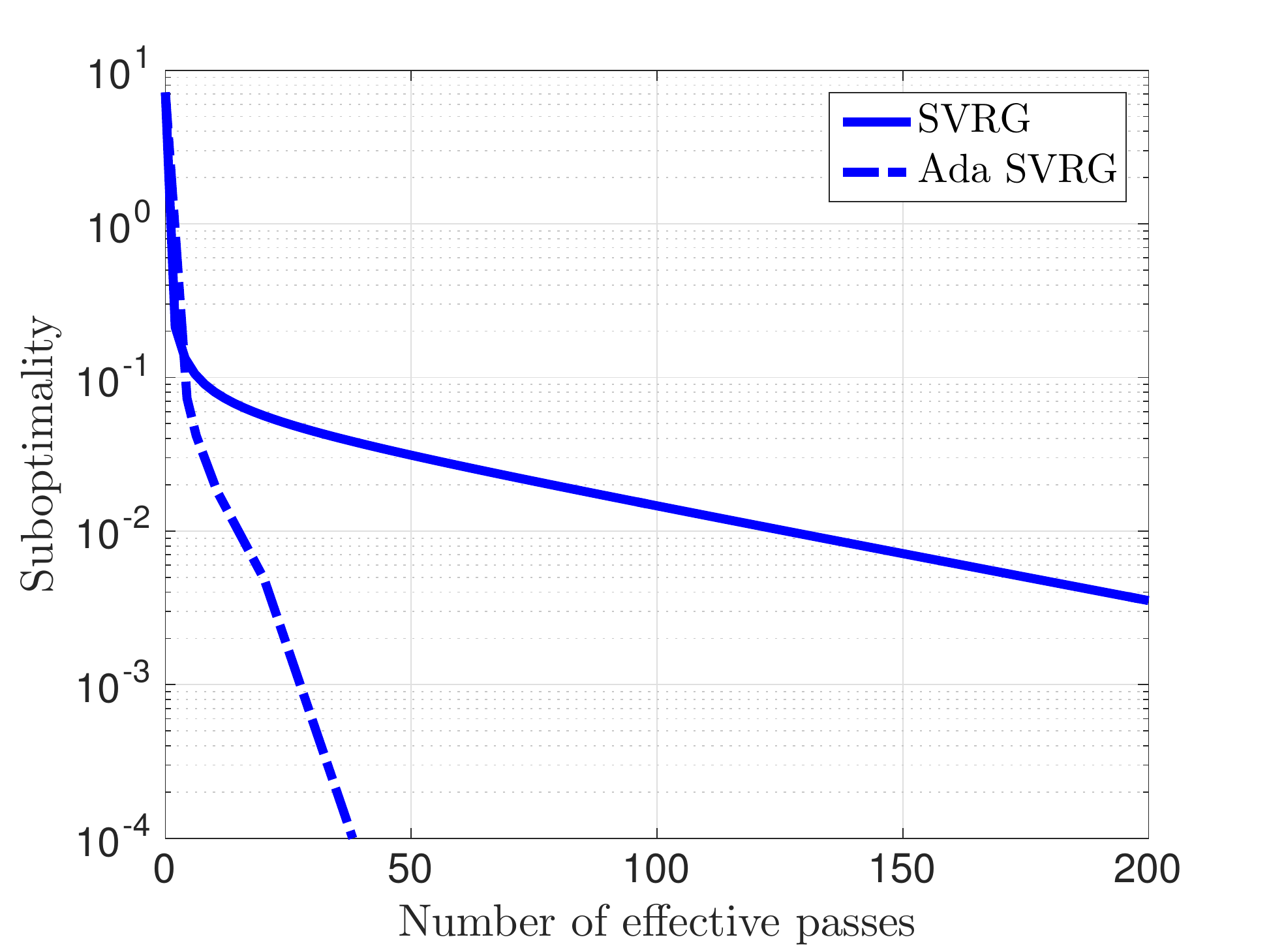}
          \vspace{-2mm}
          \caption{\small{Comparison of GD, AGD, and SVRG with their adaptive sample size versions in terms of suboptimality vs. number of effective passes for MNIST dataset with regularization of the order $\mathcal{O}(1/{n})$.}}
          \label{fig333}
          \vspace{-4mm}
\end{figure}

 \begin{figure}[t]
	\centering
          \includegraphics[width=0.325\linewidth]{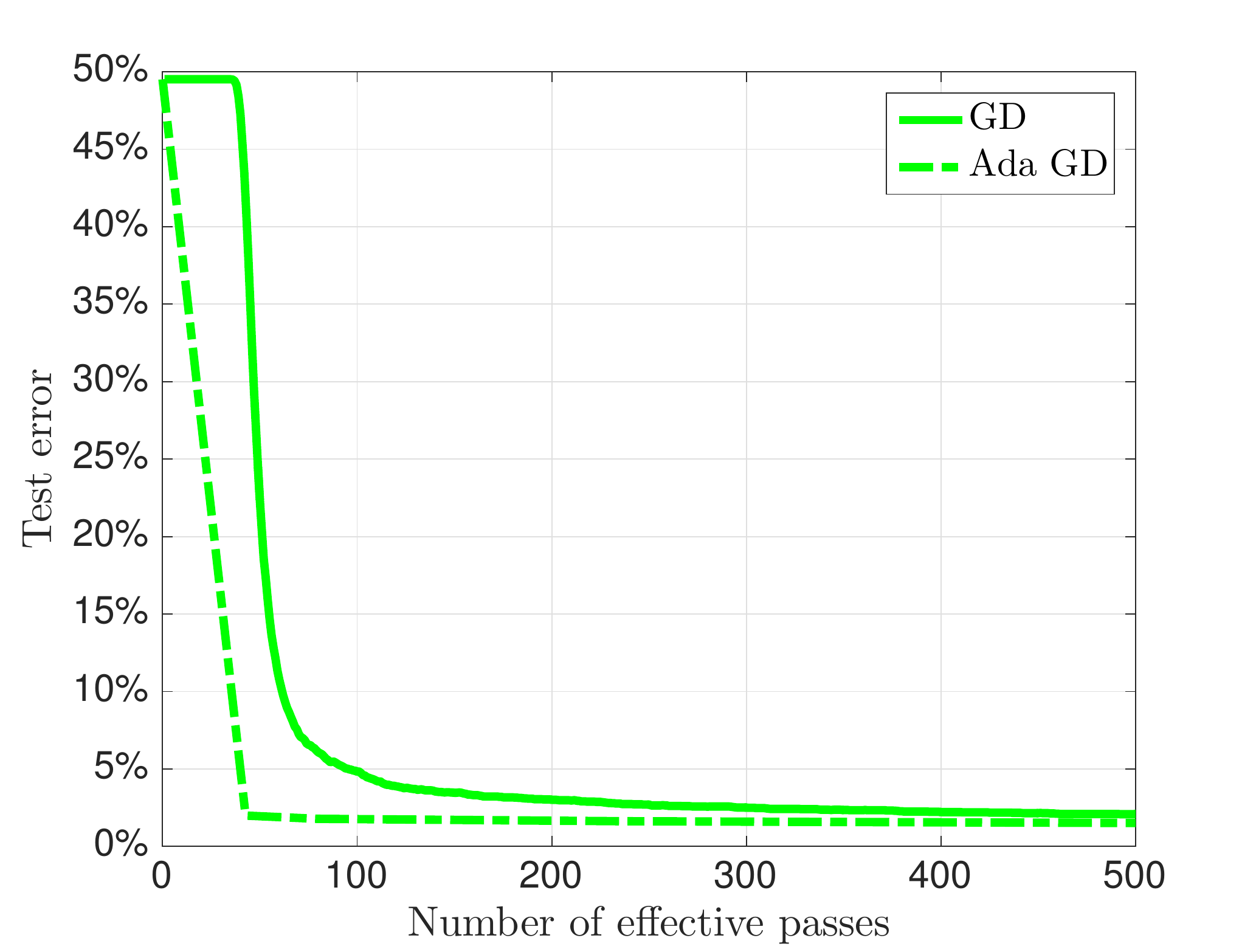}
            \includegraphics[width=0.325\linewidth]{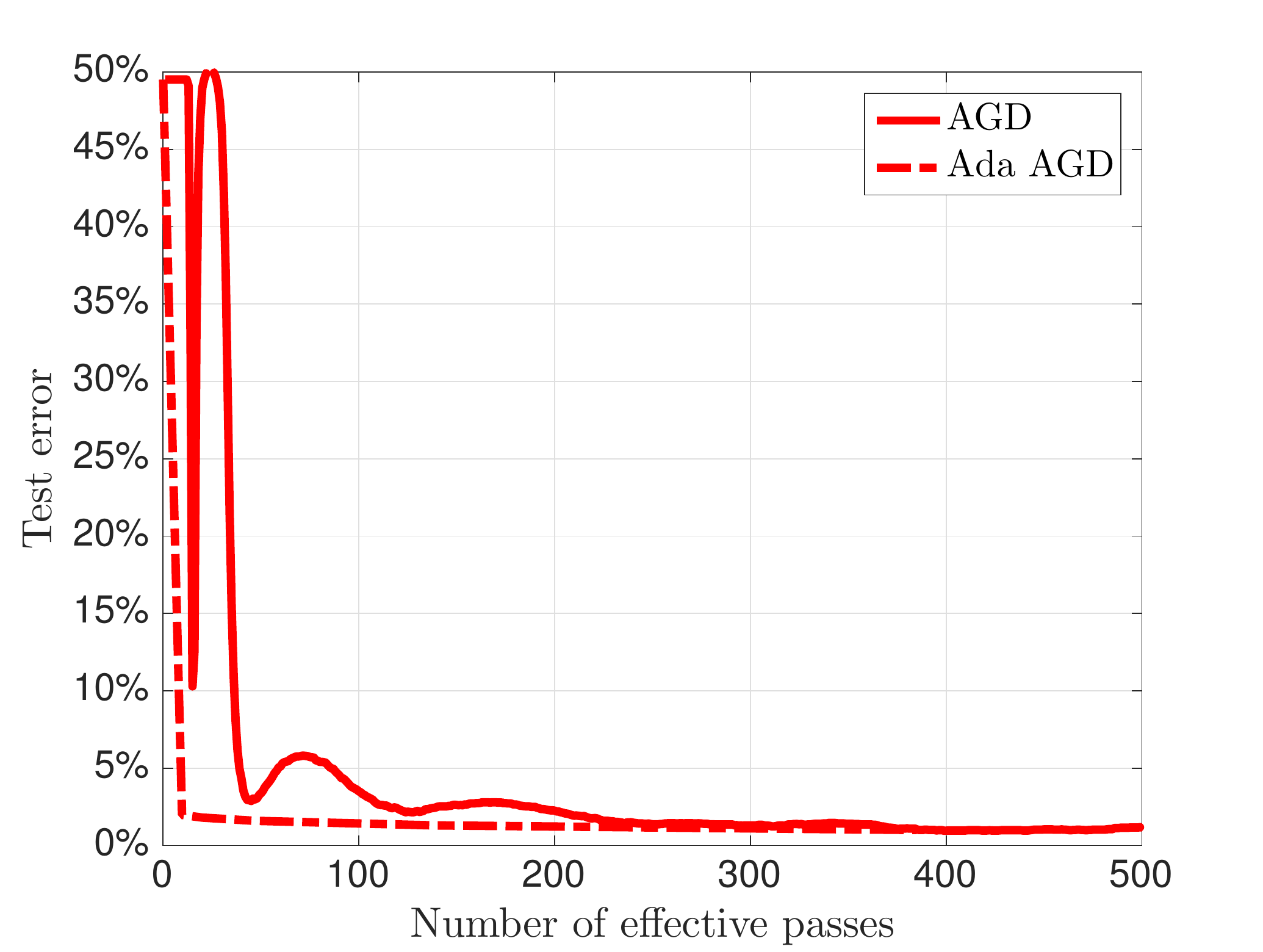}
             \includegraphics[width=0.325\linewidth]{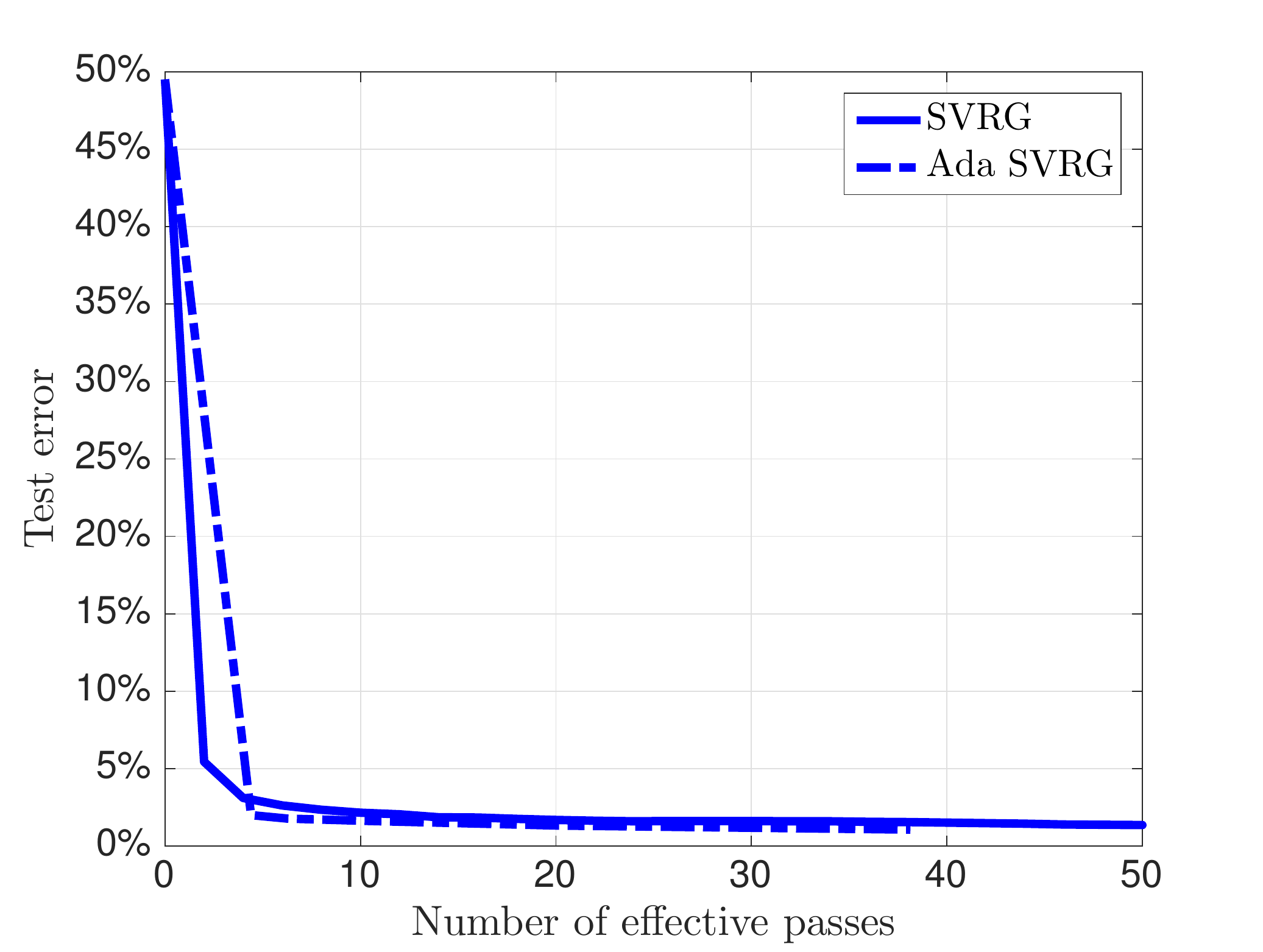}
\vspace{-2mm}
          \caption{\small{Comparison of GD, AGD, and SVRG with their adaptive sample size versions in terms of test error vs. number of effective passes for MNIST dataset with regularization of the order $\mathcal{O}(1/n)$.}}
          \label{fig444}
          \vspace{-4mm}
\end{figure}

\section{Discussions}

%

We presented an adaptive sample size scheme to improve the convergence guarantees for a class of first-order methods which have linear convergence rates under strong convexity and smoothness assumptions. The logic behind the proposed adaptive sample size scheme is to replace the solution of a relatively \textit{hard} problem -- the ERM problem for the full training set -- by a sequence of relatively \textit{easier} problems -- ERM problems corresponding to a subset of samples. Indeed, whenever $m < n$, solving the ERM problems in \eqref{eqn_empirical_loss_regularized} for loss $R_m$ is simpler than the one for loss $R_n$ because:
\begin{itemize}
\item[\bf(i)]   The adaptive regularization term of order $V_m$ makes the condition
                number of $R_m$ smaller than the condition number of $R_n$ -- which uses
                a regularizer of order $V_n$.
\item[\bf(ii)]  The approximate solution $\bbw_m$ that we need to find for $R_m$ is less accurate 
                than the approximate solution $\bbw_n$ we need to find for $R_n$.
\item[\bf(iii)] The computation cost of an iteration for $R_m$ -- e.g., the cost of 
                evaluating a gradient -- is lower than the cost of an iteration for $R_n$.
\end{itemize}
Properties (i)-(iii) combined with the ability to grow the sample size geometrically, reduce the overall computational complexity for reaching the statistical accuracy of the full training set. We particularized our results to develop adaptive (Ada) versions of AGD and SVRG. In both methods we found a computational complexity reduction of order $\mathcal{O}(\log(1/V_N))=\mathcal{O}(\log(N^\alpha))$ which was corroborated in numerical experiments. The idea and analysis of adaptive first order methods apply generically to any other approach with linear convergence rate (Theorem \ref{ada_gd_thm}). The development of sample size adaptation for sublinear methods is left for future research.


\section{Appendix}\label{sec:sup_mate}

\subsection{Proof of Proposition \ref{main_theorem_444}}\label{proof_prop_1}

The steps of the proof for Proposition \ref{main_theorem_444} are adopted from the analysis in \cite{AdaNewton}. We start the proof by providing an upper bound for the difference between the loss functions $L_n$ and $L_m$. The upper bound is studied in the following lemma which uses the condition in \eqref{eqn_loss_minus_erm}.

\begin{lemma}\label{lemma_1}
Consider $L_n$ and $L_m$ as the empirical losses of the sets $\S_n$ and $\S_m$, respectively, where they are chosen such that $\S_m\subset \S_n$. If we define $n$ and $m$ as the number of samples in the training sets $\S_n$ and $\S_m$, respectively, then the expected absolute value of the difference between the empirical losses is bounded above by
\begin{align}\label{proof_eq_44}
\mathbb{E}\left[\ \! |L_n(\bbw)-L_m(\bbw)|\ \! \right]\leq \frac{n-m}{n} \left(V_{n-m}+V_m\right),
\end{align}
for any $\bbw$.
\end{lemma}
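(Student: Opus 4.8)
The plan is to exploit the nested structure $\S_m \subset \S_n$ to write $L_n$ as a convex combination of $L_m$ and the empirical loss over the complementary samples, and then to route the comparison through the statistical loss $L$ so that the assumed bound \eqref{eqn_loss_minus_erm} can be applied twice. First I would introduce the complement set $\S_{n-m} := \S_n \setminus \S_m$, which contains exactly $n-m$ samples drawn i.i.d.\ from $P$, and denote by $L_{n-m}$ its empirical loss. Splitting the sum defining $L_n$ into the part indexed by $\S_m$ and the part indexed by $\S_{n-m}$ gives the identity
\begin{equation}\label{plan_convex_combo}
L_n(\bbw) = \frac{m}{n}\,L_m(\bbw) + \frac{n-m}{n}\,L_{n-m}(\bbw).
\end{equation}

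Next I would subtract $L_m(\bbw)$ from both sides of \eqref{plan_convex_combo}; the coefficients collapse neatly, yielding the exact relation
\begin{equation}\label{plan_key_identity}
L_n(\bbw) - L_m(\bbw) = \frac{n-m}{n}\bigl(L_{n-m}(\bbw) - L_m(\bbw)\bigr).
\end{equation}
This is the crux of the argument: it extracts the prefactor $(n-m)/n$ that appears in the claimed bound and reduces the problem to controlling $\mathbb{E}[\,|L_{n-m}(\bbw)-L_m(\bbw)|\,]$. To handle that, I would insert and remove the statistical loss $L(\bbw)$ and apply the triangle inequality, obtaining
\begin{equation}\label{plan_triangle}
|L_{n-m}(\bbw) - L_m(\bbw)| \leq |L_{n-m}(\bbw) - L(\bbw)| + |L(\bbw) - L_m(\bbw)|.
\end{equation}

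Finally I would take expectations in \eqref{plan_triangle} and invoke \eqref{eqn_loss_minus_erm}. The one point requiring a little care is that \eqref{eqn_loss_minus_erm} bounds the expected \emph{supremum} over $\bbw$, whereas here $\bbw$ is fixed; since a fixed-$\bbw$ deviation is dominated by the supremum, we still get $\mathbb{E}[\,|L(\bbw)-L_m(\bbw)|\,]\leq V_m$ and, because the $n-m$ samples of $\S_{n-m}$ are themselves i.i.d.\ draws from $P$, $\mathbb{E}[\,|L_{n-m}(\bbw)-L(\bbw)|\,]\leq V_{n-m}$. Combining these with \eqref{plan_key_identity} produces the desired inequality $\mathbb{E}[\,|L_n(\bbw)-L_m(\bbw)|\,]\leq \frac{n-m}{n}(V_{n-m}+V_m)$. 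I do not anticipate a genuine obstacle here; the only subtlety worth stating explicitly is the fixed-$\bbw$ versus supremum passage and the observation that the complement set $\S_{n-m}$ inherits the i.i.d.\ property, so that $V_{n-m}$ is the legitimate accuracy constant for its empirical loss.
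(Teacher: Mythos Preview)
Your proposal is correct and follows essentially the same route as the paper: decompose $\S_n$ into $\S_m$ and the complement $\S_{n-m}$, factor out $(n-m)/n$ to obtain the identity $L_n-L_m=\tfrac{n-m}{n}(L_{n-m}-L_m)$, then add and subtract $L(\bbw)$ and apply the triangle inequality together with \eqref{eqn_loss_minus_erm}. Your explicit remark that a fixed-$\bbw$ deviation is dominated by the supremum in \eqref{eqn_loss_minus_erm} is a nice clarification the paper leaves implicit.
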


	\begin{proof}
First we characterize the difference between the difference of the loss functions associated with the sets $\S_m$ and $\S_n$. To do so, consider the difference 
\begin{equation}\label{proof_eq_11}
L_n(\bbw)-L_m(\bbw)= \frac{1}{n} \sum_{i\in\S_n} f_i(\bbw)-\frac{1}{m} \sum_{i\in\S_m} f_i(\bbw).
\end{equation}
Notice that the set $\S_m$ is a subset of the set $\S_n$ and we can write $\S_n=\S_m\cup\S_{n-m}$. Thus, we can rewrite the right hand side of \eqref{proof_eq_11} as 
\begin{align}\label{proof_eq_22}
L_n(\bbw)-L_m(\bbw)
&= \frac{1}{n} \left[\sum_{i\in\S_m} f_i(\bbw)+\sum_{i\in\S_{n-m}} f_i(\bbw)\right]-\frac{1}{m} \sum_{i\in\S_m} f_i(\bbw)\nonumber\\
& =  \frac{1}{n} \sum_{i\in\S_{n-m}} f_i(\bbw) - \frac{n-m}{mn}\sum_{i\in\S_m} f_i(\bbw).
\end{align}
Factoring $(n-m)/n$ from the terms in the right hand side of \eqref{proof_eq_22} follows
\begin{align}\label{proof_eq_33}
L_n(\bbw)-L_m(\bbw)= \frac{n-m}{n}\left[ \frac{1}{n-m} \sum_{i\in\S_{n-m}} f_i(\bbw) - \frac{1}{m}\sum_{i\in\S_m} f_i(\bbw)\right].
\end{align}
Now add and subtract the statistical loss $L(\bbw)$ and compute the expected value to obtain
\begin{align}\label{proof_eq_44}
\mathbb{E}[|L_n(\bbw)-L_m(\bbw)| ]
&= \frac{n-m}{n}  \mathbb{E}\left[\left|\frac{1}{n-m} \sum_{i\in\S_{n-m}} f_i(\bbw) -L(\bbw)+L(\bbw)- \frac{1}{m}\sum_{i\in\S_m} f_i(\bbw)\right|\right]\nonumber\\
&\leq \frac{n-m}{n} \left(V_{n-m}+V_m\right),
\end{align}
where the last inequality follows by using the triangle inequality and the upper bound in \eqref{eqn_loss_minus_erm}. 
\end{proof}

The result in Lemma \ref{lemma_1} shows that the upper bound for the difference between the loss functions associated with the sets $\S_m$ and $\S_n$ where $\S_m\subset \S_n$ is proportional to the difference between the size of these two sets $n-m$. 

In the following lemma, we characterize an upper bound for the norm of the optimal argument $\bbw_n^*$ of the empirical risk $R_n(\bbw)$ in terms of the norm of statistical average loss $L(\bbw)$ optimal argument $\bbw^*$.

\begin{lemma}\label{lemma_2}
Consider $L_n$ as the empirical loss of the set $\S_n$ and $L$ as the statistical average loss. Moreover, recall $\bbw^*$ as the optimal argument of the statistical average loss $L$, i.e., $\bbw^*=\argmin_{\bbw} L(\bbw)$. If Assumption \ref{convexity_lip_assumption} holds, then the norm of the optimal argument $\bbw_n^*$ of the regularized empirical risk $R_n(\bbw):=L_n(\bbw)+cV_n\|\bbw\|^2$ is bounded above by 
\begin{align}\label{claim_lemma_2}
\mathbb{E}[\|\bbw_n^*\|^2] \leq \frac{4}{c}+\|\bbw^*\|^2
\end{align}
\end{lemma}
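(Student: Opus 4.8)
The plan is to exploit the defining optimality of $\bbw_n^*$ for the regularized risk and then control the resulting empirical-loss gap by passing through the statistical loss $L$. Working with the regularizer $(cV_n/2)\|\bbw\|^2$ as defined in \eqref{eqn_empirical_loss_regularized}, I would first compare $\bbw_n^*$ against the population minimizer $\bbw^*$. Since $\bbw_n^*$ minimizes $R_n$, we have $R_n(\bbw_n^*)\leq R_n(\bbw^*)$, i.e.
\begin{equation*}
\frac{cV_n}{2}\|\bbw_n^*\|^2 \leq \frac{cV_n}{2}\|\bbw^*\|^2 + \big(L_n(\bbw^*) - L_n(\bbw_n^*)\big).
\end{equation*}
It then remains to show $\mathbb{E}[L_n(\bbw^*) - L_n(\bbw_n^*)] \leq 2V_n$; taking expectations above and dividing through by $cV_n/2$ immediately yields the claim $\mathbb{E}[\|\bbw_n^*\|^2]\leq 4/c + \|\bbw^*\|^2$.

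The second step bounds the empirical-loss gap by inserting the statistical loss and telescoping:
\begin{equation*}
L_n(\bbw^*) - L_n(\bbw_n^*) = \big(L_n(\bbw^*) - L(\bbw^*)\big) + \big(L(\bbw^*) - L(\bbw_n^*)\big) + \big(L(\bbw_n^*) - L_n(\bbw_n^*)\big).
\end{equation*}
The middle term is nonpositive because $\bbw^*$ is the global minimizer of $L$. For the two outer terms I would bound each by the uniform deviation $\sup_{\bbw}|L(\bbw)-L_n(\bbw)|$ and invoke \eqref{eqn_loss_minus_erm}, which states $\mathbb{E}[\sup_{\bbw}|L(\bbw)-L_n(\bbw)|]\leq V_n$; each outer term then contributes at most $V_n$ in expectation, for a total of $2V_n$.

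The one point that requires care is that $\bbw_n^*$ is itself a random function of the draw of $\S_n$, so the bound $\mathbb{E}[|L(\bbw_n^*)-L_n(\bbw_n^*)|]\leq V_n$ cannot be obtained from a pointwise deviation at a fixed argument; it is exactly the supremum form of \eqref{eqn_loss_minus_erm} that justifies it, since $|L(\bbw_n^*)-L_n(\bbw_n^*)|\leq \sup_{\bbw}|L(\bbw)-L_n(\bbw)|$ holds surely and the supremum is then integrated. The term at $\bbw^*$ is handled identically (the supremum bound dominates it even though $\bbw^*$ is deterministic). Beyond this observation, the argument uses only the optimality inequality and the triangle inequality, so I expect no further obstacle.
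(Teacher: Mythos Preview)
Your proposal is correct and follows essentially the same argument as the paper: the optimality inequality $R_n(\bbw_n^*)\leq R_n(\bbw^*)$, the three-term telescoping of $L_n(\bbw^*)-L_n(\bbw_n^*)$ through $L$, dropping the nonpositive middle term, and bounding each outer term by $V_n$ via \eqref{eqn_loss_minus_erm}. Your explicit remark that the supremum form of \eqref{eqn_loss_minus_erm} is what legitimizes the bound at the random point $\bbw_n^*$ is a point the paper leaves implicit, so if anything your write-up is slightly more careful.
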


	\begin{proof}
The optimality condition of $\bbw_n^*$ for the the regularized empirical risk $R_n(\bbw)=L_n(\bbw)+ (cV_n)/2\|\bbw\|^2$ implies that 
\begin{equation}\label{proof_lemma_2_eq_11}
L_n(\bbw_n^*)+ \frac{cV_n}{2}\|\bbw_n^*\|^2
\leq   L_n(\bbw^*)+ \frac{cV_n}{2}\|\bbw^*\|^2.
\end{equation}
By regrouping the terms and computing the expectation we can show that $\mathbb{E}[\|\bbw_n^*\|^2]$ is bonded above by
\begin{equation}\label{proof_lemma_2_eq_22}
\mathbb{E}[\|\bbw_n^*\|^2]
\leq \frac{2}{cV_n}  \mathbb{E}[\left(L_n(\bbw^*)-L_n(\bbw_n^*)\right)]+ \|\bbw^*\|^2.
\end{equation}
We proceed to bound the difference $L_n(\bbw^*)-L_n(\bbw_n^*)$. By adding and subtracting the terms $L(\bbw^*)$ and $L(\bbw_n^*)$ we obtain that 
\begin{equation}\label{proof_lemma_2_eq_33}
L_n(\bbw^*)-L_n(\bbw_n^*) = \big[L_n(\bbw^*)-L(\bbw^*)\big]
+\big[L(\bbw^*)-L(\bbw_n^*)\big]
+\big[L(\bbw_n^*)-L_n(\bbw_n^*)\big].
\end{equation}
Notice that the second bracket in \eqref{proof_lemma_2_eq_33} is non-positive  since $L(\bbw^*)\leq L(\bbw_n^*)$. Therefore, it is bounded by $0$. According to \eqref{eqn_loss_minus_erm}, the first and third brackets in \eqref{proof_lemma_2_eq_33} are bounded above by $V_n$ in expectation. Replacing these upper bounds by the brackets in \eqref{proof_lemma_2_eq_33} yields 
\begin{equation}\label{proof_lemma_2_eq_44}
\mathbb{E}[L_n(\bbw^*)-L_n(\bbw_n^*) ]\leq  2V_n.
\end{equation}
Substituting the upper bound in \eqref{proof_lemma_2_eq_44} into \eqref{proof_lemma_2_eq_22} implies the claim in \eqref{claim_lemma_2}.
\end{proof}

Note that the difference $ R_{n}(\bbw_m) - R_{n}(\bbw_n^*)$ can be written as
\begin{align}\label{proof_eq_2_11}
 R_{n}(\bbw_m) - R_n(\bbw_n^*)
 &=  R_{n}(\bbw_m) -  R_{m}(\bbw_m) +  R_{m}(\bbw_m) - R_{m}(\bbw_m^*) \nonumber\\
 &\qquad+ R_{m}(\bbw_m^*) -R_{m}(\bbw_n^*) +R_{m}(\bbw_n^*) - R_{n}(\bbw_n^*).
\end{align}
We proceed to bound the differences in \eqref{proof_eq_2_11}. To do so, note that the difference $R_{n}(\bbw_m) -  R_{m}(\bbw_m)$ can be simplified as 
\begin{align}\label{proof_eq_2_22}
R_{n}(\bbw_m) -  R_{m}(\bbw_m)
&=L_n(\bbw_m)-L_m(\bbw_m)+\frac{c{(V_{n}-V_m)}}{2}\|\bbw_m\|^2
\nonumber\\
&\leq  L_n(\bbw)-L_m(\bbw),
\end{align}
where the inequality follows from the fact that $V_{n}<V_m$ and $V_{n}-V_m$ is negative.
It follows from the result in Lemma \ref{lemma_1} that the right hand side of \eqref{proof_eq_2_22} is bounded by $ ({n-m})/{n} \left(V_{n-m}+V_m\right)$. Therefore,
\begin{equation}\label{proof_eq_2_44}
\mathbb{E}\left[|R_{n}(\bbw_m) -  R_{m}(\bbw_m)| \right] \leq   \frac{n-m}{n} \left(V_{n-m}+V_m\right).
\end{equation}
According to the fact that $\bbw_m$ as an $\delta_m$ optimal solution for the sub-optimality $\mathbb{E}\left[ R_{m}(\bbw_m) - R_{m}(\bbw_m^*) \right]$ we know that 
\begin{equation}\label{proof_eq_2_55}
\mathbb{E}[ R_{m}(\bbw_m) - R_{m}(\bbw_m^*) ] \leq   \delta_m.
\end{equation}
Based on the definition of $\bbw_{m}^*$ which is the optimal solution of the risk $R_{m}$, the third difference in \eqref{proof_eq_2_11} which is $ R_{m}(\bbw_m^*) -R_{m}(\bbw_n^*) $ is always negative. I.e., 
\begin{equation}\label{proof_eq_2_66}
R_{m}(\bbw_m^*) -R_{m}(\bbw_n^*)    \leq  0.
\end{equation}
Moreover, we can use the triangle inequality to bound the difference $ R_{m}(\bbw_n^*) - R_{n}(\bbw_n^*)$ in \eqref{proof_eq_2_11} as
\begin{align}\label{proof_eq_2_77}
\mathbb{E}[R_{m}(\bbw_n^*) - R_{n}(\bbw_n^*) ]
& =  \mathbb{E}[L_m(\bbw_n^*)-L_n(\bbw_n^*) ]+ \frac{c(V_m-V_n)}{2}\mathbb{E}[ \|\bbw_n^*\|^2]\nonumber\\
& \leq  \frac{n-m}{n} \left(V_{n-m}+V_m\right)+ \frac{c(V_m-V_n)}{2}\mathbb{E}[\|\bbw_n^*\|^2].
\end{align}
Replacing the differences in \eqref{proof_eq_2_11} by the upper bounds in \eqref{proof_eq_2_44}-\eqref{proof_eq_2_77} leads to 
\begin{equation}\label{proof_eq_2_88}
\mathbb{E}[ R_{n}(\bbw_m) - R_n(\bbw_n^*)] \leq \delta_m +  \frac{2(n-m)}{n} \left(V_{n-m}+V_m\right)+ \frac{c(V_m-V_n)}{2}\mathbb{E}[\|\bbw_n^*\|^2]
\end{equation}
Substitute $\mathbb{E}[\|\bbw_n^*\|^2]$ in \eqref{proof_eq_2_88} by the upper bound in \eqref{claim_lemma_2} to obtain the result in \eqref{theorem_result_444}.

\subsection{Proof of Theorem  \ref{ada_gd_thm}}\label{proof_thm_gd}

According to the result in Proposition \ref{main_theorem_444} and the condition that $\mathbb{E}[ R_{m}(\bbw_m) - R_m(\bbw_m^*)] \leq V_m$, we obtain that
\begin{equation}\label{proof_gd_100}
\mathbb{E}[ R_{n}(\bbw_m) - R_n(\bbw_n^*)] \leq\left[3+\left(1-\frac{1}{2^{\alpha}} \right)\left(2+\frac{c}{2}\|\bbw^*\|^2\right)\right]V_m.
\end{equation}
If we assume that the first-order descent method that we use to update the iterates has a linear convergence rate, then there exists a constant $0< \rho_n<1$ we obtain that after $s_n$ iterations the error is bounded above by 
\begin{equation}\label{proof_gd_200}
 R_{n}(\bbw_n) - R_n(\bbw_n^*) \leq \rho_n^{s_n}( R_{n}(\bbw_m) - R_n(\bbw_n^*)).
\end{equation}
The result in \eqref{proof_gd_200} holds for deterministic methods. If we use a stochastic linearly convergent method such as SVRG, then the result holds in expectation and we can write 
\begin{equation}\label{proof_gd_201}
\mathbb{E}[ R_{n}(\bbw_n) - R_n(\bbw_n^*)] \leq \rho^{s_n}( R_{n}(\bbw_m) - R_n(\bbw_n^*)),
\end{equation}
where the expectation is with respect to the index of randomly chosen functions.

 It follows form computing the expected value of both sides in \eqref{proof_gd_200} with respect to the choice of training sets and using the upper bound in \eqref{proof_gd_100} for the expected difference $\mathbb{E}[ R_{n}(\bbw_m) - R_n(\bbw_n^*)]$ that 
\begin{equation}\label{proof_gd_300}
\mathbb{E}[  R_{n}(\bbw_n) - R_n(\bbw_n^*)] \leq  \rho^{s_n}\left[3+\left(1-\frac{1}{2^{\alpha}} \right)\left(2+\frac{c}{2}\|\bbw^*\|^2\right)\right]V_m.
\end{equation}
Note that the inequality in \eqref{proof_gd_300} also holds for stochastic methods. The difference is in stochastic methods the expectation is with respect to the choice of training sets and the index of random functions, while for deterministic methods it is only with respect to the choice of training sets.

To ensure that the suboptimality $\mathbb{E}[  R_{n}(\bbw_n) - R_n(\bbw_n^*)] $ is smaller than $V_n$ we need to guarantee that the right hand side in \eqref{proof_gd_300} is not larger than $V_n$, which is equivalent to the condition 
\begin{equation}\label{proof_gd_400}
 \rho^{s_n} \left[3+\left(1-\frac{1}{2^{\alpha}} \right)\left(2+\frac{c}{2}\|\bbw^*\|^2\right)\right]\leq \frac{1}{{2}^\alpha}.
\end{equation}
By regrouping the terms in \eqref{proof_gd_400} we obtain that 
\begin{equation}\label{proof_gd_500}
{s_n} \geq -\frac{\log \left[3\times 2^\alpha+\left({2^{\alpha}}-{1} \right)\left(2+\frac{c}{2}\|\bbw^*\|^2\right)\right]}{\log(\rho_n) },
\end{equation}
and the claim in \eqref{claim_gd_1} follows. 

%

%

\subsection{Proof of Theorem  \ref{ada_nes_thm}}\label{proof_thm_nes}

Note that according to the convergence result for accelerated gradient descent in \cite{nesterov2013introductory}, the sub-optimality of accelerated gradient descent method is linearly convergent with the constant $1-1/\sqrt{\kappa}$ where $\kappa$ is the condition number of the objective function. In particular, the suboptimality after $s_n$ iterations is bounded above by 
\begin{equation}
 R_{n}(\bbw_n) - R_n(\bbw_n^*) \leq \left(1-\sqrt{\frac{1}{\kappa}} \ \right)^{s_n} \left( R_{n}(\bbw_m) - R_n(\bbw_n^*) +\frac{m}{2}\|\bbw_m-\bbw_n^*\|^2\right),
\end{equation}
where $m$ is the constant of strong convexity. Replacing $\frac{m}{2}\|\bbw_m-\bbw_n^*\|^2$ by its upper bound $ R_{n}(\bbw_m) - R_n(\bbw_n^*)$ leads to the expression 
\begin{equation}
 R_{n}(\bbw_n) - R_n(\bbw_n^*) \leq 2\left(1-\sqrt{\frac{1}{\kappa}} \ \right) ^{s_n}\left( R_{n}(\bbw_m) - R_n(\bbw_n^*) \right),
\end{equation}

Hence, if we follow the steps of the proof of Theorem 2 we obtain that $s_n$ should be larger than 

\begin{equation}\label{proof_nes_500}
{s_n} \geq -\frac{ \ {\log \left[6\times 2^\alpha+\left({2^{\alpha}}-{1} \right)\left(4+{c}\|\bbw^*\|^2\right)\right]}}{\log(1-1/\sqrt{\kappa})}.
\end{equation}
According to the inequality $-\log (1-x)>x$, we can replace $-\log(1-1/\sqrt{\kappa})$ by its lower bound $1/\sqrt{\kappa}$ to obtain
%
\begin{equation}\label{proof_nes_600}
{s_n} \geq \sqrt{\kappa_n} \ {\log \left[6\times 2^\alpha+\left({2^{\alpha}}-{1} \right)\left(4+c\|\bbw^*\|^2\right)\right]}.
\end{equation}
Note if the condition in \eqref{proof_nes_600} holds, then the inequality in \eqref{proof_nes_500} follows. The condition number of the risk $R_n$ is given by $\kappa_n=(M+cV_n)/cV_n$. Further, as stated in the statement of the theorem, $V_n$ can be written as $V_n=\gamma/n^\alpha$ where $\gamma$ is a positive constant and $\alpha\in[0.5,1]$. Based on these expressions, we can rewrite \eqref{proof_nes_600} as
\begin{equation}\label{proof_nes_700}
{s_n} \geq \sqrt{ \frac{n^\alpha M+c\gamma}{c\gamma}} \log \left[6\times 2^\alpha+\left({2^{\alpha}}-{1} \right)\left(4+c\|\bbw^*\|^2\right)\right],
\end{equation}
which follows the claim in \eqref{claim_nes_1}. If we assume that we start with $m_0$ samples such that $N/m_0=2^q$ where $q$ is an integer then the total number of gradient computations to achieve $V_N$ for the risk $R_N$ is given by
\begin{align}\label{proof_nes_800}
&\sum_{n=m_0,2m_0,\dots, N}   \sqrt{ \frac{n^\alpha M+c\gamma}{c\gamma}}\ 
\log \left[6\times 2^\alpha+\left({2^{\alpha}}-{1} \right)\left(4+c\|\bbw^*\|^2\right)\right] 
\nonumber\\ 
& \leq 
\log \left[6\times 2^\alpha+\left({2^{\alpha}}-{1} \right)\left(4+c\|\bbw^*\|^2\right)\right] 
\sum_{n=m_0,2m_0,\dots, N}  1+\sqrt{\frac{n^\alpha M}{c\gamma}} \nonumber\\ 
&
=\log \left[6\times 2^\alpha+\left({2^{\alpha}}-{1} \right)\left(4+c\|\bbw^*\|^2\right)\right] 
\left[ (q+1)+ \sqrt{\frac{m_0^\alpha M}{c\gamma}}\left(\frac{\sqrt{  {2^{(q+1)}}^\alpha  }-1}{\sqrt{2^\alpha}-1} \right)\right]
\nonumber\\ 
&
\leq \log \left[6\times 2^\alpha+\left({2^{\alpha}}-{1} \right)\left(4+c\|\bbw^*\|^2\right)\right] 
\left[ (q+1)+\sqrt{\frac{{m_0}^\alpha M}{c\gamma}}\left(\frac{\sqrt{  {2^{(q+1)}}^\alpha  }  }{\sqrt{2^\alpha}-1} \right)\right]
\nonumber\\ 
&
= \log \left[6\times 2^\alpha+\left({2^{\alpha}}-{1} \right)\left(4+c\|\bbw^*\|^2\right)\right] 
\left[ (q+1)+\sqrt{\frac{{N}^\alpha M}{c\gamma}}\left(\frac{\sqrt{2^{\alpha}}}{\sqrt{2^{\alpha}}-1} \right)\right].
\end{align}
Replacing $q$ by $\log_2(N/m_0)$ leads to the bound in \eqref{claim_nes_2}.

\subsection{Proof of Theorem  \ref{ada_svrg_thm}}\label{proof_ada_svrg_thm}
Let's recall the convergence result of SVRG after $s$ outer loop where each inner loop contains $r$ iterations. We can show that if $\bbw_m $ is the variable corresponding to $m$ samples and $n$ is the variable associated with $n$ samples, then we have

\begin{align}
\E_n{\left[R_n(\bbw_n)-R_n(\bbw_n^*)\right]} \leq \rho^s  \left[R_n(\bbw_m)-R_n(\bbw_n^*)\right],
\end{align}
where the expectation is taken with respect to the indices chosen in the inner loops, and the constant $\rho$ is defined as 
\begin{align}
\rho:=\frac{1}{\gamma \eta(1-2L_0\eta)r} + \frac{2L_0\eta}{1-2L_0\eta} <1
\end{align}
where $\gamma$ is the constant of strong convexity, $L_0$ is the constant for the Lipschitz continuity of gradients, $q$ is the number of inner loop iterations, and $\eta$ is the stepsize. If we assume that $V_n=\mathcal{O}(1/n^\alpha)$, then we obtain that $\gamma= c/n^\alpha$ and $L_0=M+c/n^\alpha$. Further, if we set the number of inner loop iteration as $q=n$ and the stepsize as  $\eta=0.1/ L_0$, the expression for $\rho$ can be simplified as 
\begin{align}
\rho:=\frac{Mn^\alpha+c}{0.08  nc} + \frac{1}{4}<\frac{1}{2},
\end{align}
where the inequality holds since the size of training set is such that $(Mn^\alpha+c)/(nc)\leq 0.02$. Considering the result in \eqref{proof_gd_500} and the upper bound for the linear factor $\rho$, to ensure that that outocme of the Ada SVRG is within the statistical accuracy of the risk $R_n$ the number of outer loops $s_n$ should be larger than
\begin{equation}
{s_n}\ \geq\ {\log_2 \left[3\times 2^\alpha+\left({2^{\alpha}}-{1} \right)\left(2+\frac{c}{2}\|\bbw^*\|^2\right)\right]},
\end{equation}
and the result in \eqref{claim_ada_svrg_1} follows.

Since each outer loop requires one full gradient computation and $n$ inner loop iterations the total number of gradient computations (computational complexity) of Ada SVRG at the stage of minimizing $R_n$ is given by $2n s_n$. Therefore, if we assume that we start with $m_0$ samples such that $N/m_0=2^q$ where $q$ is an integer, then the total number of gradient computations to achieve $V_N$ for the risk $R_N$ is given by
\begin{align}
&\sum_{n=m_0,2m_0,\dots, N} 2 n\ {\log_2 \left[3\times 2^\alpha+\left({2^{\alpha}}-{1} \right)\left(2+\frac{c}{2}\|\bbw^*\|^2\right)\right]}\nonumber\\ 
& = 
2m_0  \frac{2^{q+1} -1}{2-1}\
{\log_2 \left[3\times 2^\alpha+\left({2^{\alpha}}-{1} \right)\left(2+\frac{c}{2}\|\bbw^*\|^2\right)\right]}\
\nonumber\\ 
&
\leq 4N\ \log_2 \left[3\times 2^\alpha+\left({2^{\alpha}}-{1} \right)\left(2+\frac{c}{2}\|\bbw^*\|^2\right)\right],
\end{align}
which yields the claim in \eqref{claim_ada_svrg_2}.

\bibliography{bibliography}
\bibliographystyle{plain}
\end{document}